\documentclass[10pt]{article}
\usepackage{float}
\PassOptionsToPackage{round,authoryear}{natbib}
\usepackage[final]{saints}

\usepackage{wrapfig}
\setcitestyle{authoryear,round,citesep={;},aysep={,},yysep={;}}
\usepackage{multirow}
\usepackage[table]{xcolor}
\usepackage{enumitem}
\usepackage{adjustbox}
\usepackage{aliascnt}

\newcommand{\mymacro}[1]{#1}

\newcommand{\E}{\mymacro{\mathbb{E}}}
\newcommand{\Prob}{\mymacro{\mathbb{P}}}

\newcommand{\R}{\mymacro{\mathbb{R}}}

\newcommand{\N}{\mymacro{\mathbb{N}}}

\newcommand{\vw}{\mymacro{\mathbf{w}}}

\newcommand{\mW}{\mymacro{\mathbf{W}}}

\newcommand{\cM}{\mymacro{\mathcal{M}}}
\newcommand{\cW}{\mymacro{\mathcal{W}}}

\newcommand{\w}{\mymacro{\mathrm{w}}}

\newcommand{\CTM}{\mymacro{\mathrm{CTM}}}
\newcommand{\BDM}{\mymacro{\mathrm{BDM}}}

\newcommand{\Ccal}{\mathcal{C}}

\newcommand{\floor}[1]{\left\lfloor #1 \right\rfloor}

\newtheoremstyle{saintsplain}
  {6pt}
  {6pt}
  {\itshape}
  {}
  {\bfseries}
  {.}
  {0.5em}
  {\thmname{#1}\thmnumber{ #2}\thmnote{ \bfseries(#3)}}

\newtheoremstyle{saintsdefinition}
  {6pt}
  {6pt}
  {\normalfont}
  {}
  {\bfseries}
  {.}
  {0.5em}
  {\thmname{#1}\thmnumber{ #2}\thmnote{ \bfseries(#3)}}

\renewcommand{\PaperVenueLine}{}
\renewcommand{\PaperTitle}{Characterizing Learning in Deep Neural Networks using Tractable Algorithmic Complexity Analysis}
\renewcommand{\PaperLeadAuthors}{Pedram Bakhtiarifard}
\renewcommand{\PaperAuthors}{Pedram Bakhtiarifard\AffMark{1}, Sophia N. Wilson\AffMark{1}, Mahmoud Afifi\AffMark{2}, \\ Jonathan Wensh{\o}j\AffMark{1}, and Raghavendra Selvan\AffMark{1}}

\renewcommand{\PaperAffiliations}{\AffMark{1}Department of Computer Science, University of Copenhagen, Denmark \\ \AffMark{2}\,The American University in Cairo, Egypt}
\renewcommand{\PaperDate}{April 2026}
\renewcommand{\PaperCorrespondence}{\href{mailto:\{pba,raghav\}@di.ku.dk}{\texttt{\{pba,raghav\}@di.ku.dk}}}
\renewcommand{\PaperCode}{\href{https://github.com/saintslab/QuBD}{\texttt{github.com/saintslab/QuBD}}}
\renewcommand{\PaperBibliographyStyle}{abbrvnat}

\begin{document}

\makePaperFrontmatter

\begin{abstract}
  Training large-scale deep neural networks (DNNs) is resource-intensive, making model compression a practical necessity. 
The widely accepted ``learning as compression'' hypothesis posits that training induces structure in network weights, which enables compression.
Measuring this structure through Kolmogorov-Chaitin-Solomonoff (KCS) complexity is appealing, but existing estimators based on the Coding Theorem Method (CTM) and the Block Decomposition Method (BDM) are limited to small binary objects and do not scale to modern DNNs.
We introduce the Quantized Block Decomposition method (QuBD), which extends algorithmic complexity estimation to any $k$-ary object. QuBD first quantizes the network weights to a finite alphabet, then estimates the KCS complexity by aggregating per bit-plane CTM estimates. 
We show theoretically that QuBD yields a strictly tighter estimation gap with respect to true KCS complexity than binarization-based methods. 
Using QuBD, we study how the algorithmic complexity of neural network weights evolves during training, showing that it decreases as models learn, scales with data budget, increases during overfitting, follows the delayed generalization observed during grokking, and correlates with generalization performance.
We further show that algorithmic information resides predominantly in the most significant bit-planes, which can serve as a practical diagnostic for determining appropriate post-training quantization levels. This work offers novel insights into learning mechanisms in DNNs by providing the first scalable, tractable estimates of KCS complexity for large, non-binary objects such as DNN weights.
\looseness=-1

\end{abstract}

\section{Introduction}
\label{sec:intro}

Training recent large-scale deep neural networks (DNNs) requires energy-intensive data centers that incur a large environmental footprint~\citep{strubell2019energy,anthony2020carbontracker,sevilla2022compute}. One way to address this is through the standard practice of compressing trained networks by removing the majority of their parameters (pruning), transferring their capabilities into smaller networks (knowledge distillation), or reducing the number of bits used to store each parameter (quantization)~\citep{bartoldson2023compute}. In most cases, such model compression techniques are able to drastically reduce memory, inference speed, and energy costs with only small drops in performance~\citep{gong2014compressing,hinton2014distilling, han2016deepcompressioncompressingdeep,nagel2021white}. Despite these benefits, compression is typically introduced as a separate stage after training, rather than as a property arising from the training dynamics themselves.

While the standard modus operandi in DNN compression methods can be characterized as: \emph{start with bigger models to end with smaller ones}. One widely accepted hypothesis of ''learning as compression'' by~\cite{shwartz2017opening} seeks to explain the learning dynamics of DNNs that make such compression possible. 
Three lines of evidence support this view.
First, using the information bottleneck method,~\cite{tishby2000information} shows that a large part of training is devoted to learning more compact representations of the input, rather than improving performance directly. This suggests that the learned function depends on fewer distinct patterns, making it simpler to represent. 
Second, \cite{valledeep} finds that training biases the networks towards simpler functions.
Third, the lottery ticket hypothesis~\citep{frankle2019lotterytickethypothesisfinding} demonstrates that such simple functions can be realized by small sub-networks within the original model. However, all three accounts are mostly understood in the context of feedforward networks, and describe \emph{what} structure emerges (simpler representations and sparse sub-networks), rather than \emph{when} it emerges or \emph{how much} compression occurs.
\looseness=-1

\begin{figure}[t]
    \centering
    \includegraphics[width=\textwidth]{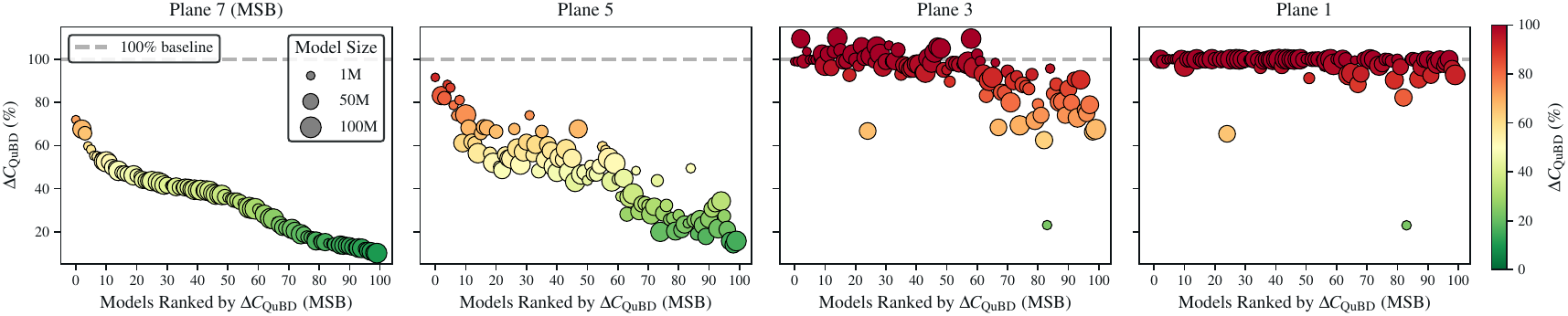}    
    \caption{Relative algorithmic complexity, expressed as the ratio between the complexities of pretrained and random weights, estimated using QuBD, $\Delta\Ccal_\text{QuBD}$ (\%). We show the trend for 100 pretrained {\tt timm} models~\citep{rw2019timm}, up to 100M parameters, using the QuBD with 8-bit planes. Models are ordered by decreasing $\Delta\Ccal_\text{QuBD}$ for the most significant bit-plane (MSB). Further details can be found in Sec.~\ref{sec:learning}, Appx.~\ref{app:timm_bit_planes}, and Appx.~\ref{app:qubd_ratio}.}
    \label{fig:complexity}
    \vspace{-0.5cm}
\end{figure}

Recent work by~\cite{sakabe2025evaluating} addresses these gaps by characterizing learning in binary neural networks through the lens of algorithmic information theory~\citep{chaitin1977algorithmic}. They show that algorithmic complexity, estimated as structure in weights or the lack thereof, correlates strongly with learning dynamics. 
This provides direct empirical support for the ``learning as compression'' hypothesis: training induces structure in the model parameters, and this structure is precisely what enables compression. Building on this, \citet{bakhtiarifard2026momos} shows that such a structure can be actively enforced throughout training without performance loss, yielding model classes that are inherently more compressible.
\looseness=-1

Despite this promise, measuring algorithmic~\citep{kolmogorov1965three,chaitin1966length,solomonoff1960preliminary} (KCS) complexity is non-computable by definition, and is only upper-semicomputable. Current KCS complexity estimators rely on the Coding Theorem Method (CTM), which estimates the output frequency of small Turing machines through large-scale simulations~\citep{soler2014calculating}. These simulations are limited to a small number of states and binary symbols, making the method practical only for small binary objects. The Block-Decomposition Method (BDM) proposed by~\cite{zenil2018decomposition} scales this approach by estimating complexity from small sub-structures within larger objects. However, as we show through both theoretical and empirical analyses (Sec.~\ref{sec:method}), BDM does not scale to DNNs, thereby preventing a comparison of learning dynamics across model sizes and architectures.
\looseness=-1

To address this, we introduce the Quantized Block-Decomposition method (QuBD), which quantizes any non-binary object into a finite set of integer values and estimates their KCS complexity through a bit-plane decomposition. This extends CTM-based estimators for floating-point objects like the weights of DNNs. We show theoretically that QuBD yields a tighter estimation of KCS complexity of complex objects. Using QuBD, we provide the first scalable empirical characterization of the ``learning as compression'' hypothesis for DNNs beyond simple feedforward networks. We show that KCS complexity decreases as the models are trained. This is visualized in Fig.~\ref{fig:complexity} for 100 models with up to 100M trainable parameters.  Furthermore, we can observe that the algorithmic complexity is lower in the most significant bit-planes, it correlates with generalization, and can serve as a practical diagnostic for model compression settings.

Our main contributions are:
\begin{itemize}[itemsep=0.5pt, topsep=3pt,leftmargin=*]
    \item We introduce QuBD, which combines quantization with a CTM-based estimator over bit-plane decomposition, making KCS complexity analysis feasible for large, non-binary objects.
    \item We prove that QuBD yields a strictly smaller estimation gap relative to true KCS complexity compared to binarization-based methods. 

    \item We use QuBD to study DNN learning dynamics, showing that KCS complexity decreases as models learn and generalize, increases during over-fitting, and tracks phenomena like grokking.

    \item We demonstrate that QuBD complexity serves as a practical diagnostic for model compression, as algorithmic information concentrates in the most significant bit-planes while those with no complexity reduction during training can be safely discarded. 
\end{itemize}

\section{Background and Related Work}
\label{sec:related}

\subsection{Algorithmic Complexity of Neural Networks} 
Different notions of model compression have been used to explain generalization. For instance, classical learning theory~\citep{vc99,bartlett2002rademacher} suggests that models with more parameters than data points should overfit, yet such models often generalize well in practice~\citep{rethink_zhang,neyshabur2017exploringgeneralizationdeeplearning,Belkin_2019}. Moreover, a large fraction of model parameters can be removed~\citep{hinton2014distilling}, shared~\citep{ullrich2017soft}, or quantized~\citep{han2016deepcompressioncompressingdeep} post-training with only small drops in performance.

Since neural networks admit many function-preserving reparametrizations~\citep{dinh2017sharpminimageneralizedeep}, quantities such as curvature or flatness can vary without changing the learned function, limiting how directly they explain generalization behavior~\citep{wilson2025deep,goldblum2024position}. This motivates the algorithmic complexity view of learning, where complexity and compressibility are studied through the description length of the learned object: the lower the KCS complexity, the shorter the generating description an object admits~\citep{li2008introduction}. From this perspective,~\citet{valledeep} shows that models exhibit a simplicity bias, favoring functions with low KCS complexity, explaining why structured targets are often learned in practice and tend to exhibit redundancy. Moreover, \citet{martin2025setolsemiempiricaltheorydeep} find that parameters of a trained model contain measurable correlation structure that evolves during learning and tracks generalization.

Although KCS complexity is non-computable, approximations based on the algorithmic probability of an object~\citep{solomonoff1964formal}, computed from large-scale simulations of Turing machines~\citep{soler2014calculating, zenil2018decomposition}, have enabled recent works to assess this perspective for neural networks empirically. Such estimators track structural properties better compared to purely statistical measures like entropy~\citep{zenil2020algorithmic}, commonly used to explain properties of neural networks~\citep{xu2017information, achille2018emergenceinvariancedisentanglementdeep}. In particular,~\cite{sakabe2025evaluating} shows that estimated KCS complexity tracks learning dynamics in binarized neural networks more closely than entropy-based measures. That complexity decreases throughout training, capturing changes in the algorithmic structure of the learned function and how such structure shapes the optimization landscape~\citep{bakhtiarifard2026momos}. However, existing estimators do not scale to modern networks, limiting their applicability and motivating the development of tractable KCS complexity estimators for DNNs. 

\subsection{Algorithmic Information Theory}
\label{sec:ait}
The central idea formalized in KCS complexity is that structured objects admit shorter descriptions than truly random objects. For a
finite binary object $\vw\in\{0,1\}^*$, program $\mathrm{p}$, this is expressed as
\begin{equation}\label{eq:kc}
    \Ccal(\vw)
    :=
    \min\{|\mathrm p|:U(\mathrm p)=\vw\},
\end{equation}
where $U$ is a fixed universal prefix-free Turing machine and $|\mathrm p|$ denotes the length of the program in bits. Since determining whether all shorter programs produce $\vw$ and halt is undecidable~\citep{davis1982computability}, KCS complexity is not computable in general.  

Instead, estimates are obtained via algorithmic probability through large-scale simulations of finite Turing machines, using the Coding Theorem~\citep{levin1974laws} as operationalized by~\cite{zenil2015twodimensionalkolmogorovcomplexityvalidation} in the Coding Theorem Method (CTM). CTM estimates the algorithmic probability of an object by its empirical output frequency across all simulations, and the collection of such estimations is referred to as a CTM table.

BDM~\citep{zenil2018decomposition} extends CTM to larger objects by partitioning them into smaller blocks, estimating the complexity of each block using pre-computed CTM tables, and accounting for repeated blocks with a logarithmic multiplicity term.

Concretely, let $\mW$ be a large object partitioned by a lossless function $\Pi(\mW)=(\w_1,\ldots,\w_m)$ and let $\Pi_u(\mW)$ denote the set of distinct blocks of $\mW$. The KCS complexity is then estimated as
\begin{equation}\label{eq:kc_bdm}
    \Ccal_\BDM(\mW)
    =
    \sum_{\w\in\Pi_u(\mW)}
    \Ccal_\CTM(\w)+\log_2 n_\w .
\end{equation}
where $n_\w$ is the number of occurrences of $\w \in \Pi(\mW)$ and $\Ccal_{\CTM}(\w)$ is the estimated KCS complexity of $\w$, derived from its empirical output frequency across Turing machine simulations. 

Since CTM tables are defined over binary objects, non-binary objects must first be represented as such prior to partitioning and estimation, which is typically achieved through binarization. Furthermore, the block sizes supported by CTM tables are limited, e.g., $(4 \times 4)$ blocks, which creates a problem when meaningful structures are only revealed at larger scales.

Since scaling CTM tables to larger block sizes is computationally prohibitive and the supported two-dimensional blocks are only defined over binary objects, two fundamental limitations arise. First, the binarized representations may be lossy, discarding useful information about the original object, such that the complexity estimate may correspond to a different target object than intended. Second, the representation and partitioning determine which structures are visible in CTM tables; if meaningful structure is not exposed, it becomes difficult to distinguish genuinely complex objects from objects whose structure is simply obfuscated.
\looseness=-1

We provide further background on algorithmic complexity in Appx.~\ref {app:background}. All CTM estimates in this work are from the CTM tables in \texttt{pyBDM}~\citep{pybdmSoftware}, with a block size $\pi \in \{(2 \times 2), (3 \times 3), (4 \times 4) \}$ stated for each experiment.

\section{Tractable Algorithmic Complexity Analysis using QuBD}
\label{sec:method}

\begin{wrapfigure}[18]{r}{0.475\textwidth}
\vspace{-1.6em}
\centering
\includegraphics[width=0.4\textwidth]{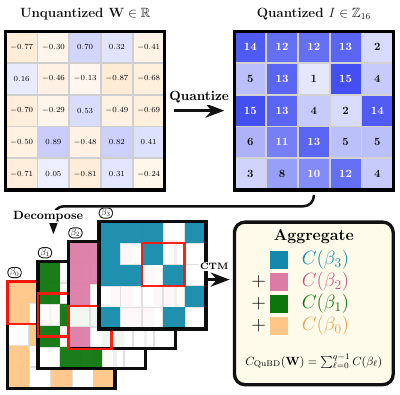}
\vspace{-0.1em}
\caption{QuBD first quantizes weights into finite codes \textbf{(top)}, then decomposes them into aligned bit-planes, to aggregate the per-plane KCS estimates \textbf{(bottom)}.}
\label{fig:qubd}
\vspace{-0.8em}
\end{wrapfigure}

Our proposed KCS complexity estimation method, QuBD, shown in Fig.~\ref{fig:qubd}, first quantizes the model weights to a finite alphabet, then represents each quantized weight by its binary expansion across bit-planes. The KCS complexity estimation is then performed by aggregating the CTM-based estimations over all the bit-planes. We show theoretically that this yields a tighter estimate of KCS complexity than BDM-based methods that rely on binarization. 
We verify this empirically and further characterize the conditions under which such estimations remain informative across object scales, given the finite CTM tables.

Let $\mW = (w_1, \ldots, w_d) \in \R ^d$ denote the object of model weights. We map $\mW$ to a finite alphabet $\Sigma_q$ using an affine uniform quantizer $Q_q : \R ^d \to \Sigma_q^d$, described in Appx.~\ref{app:quantizer}. This yields the object
\begin{equation}
\mW^{(q)} = Q_q(\mW) = \bigl(w_1^{(q)},\dots,w_d^{(q)}\bigr) \in \Sigma_q^d, \text{ with } \Sigma_q := \{0,\dots,2^q-1\},
\label{eq:object_quantized}
\end{equation}
where each weight $w^{(q)}_i$ is a $q$-bit symbol of $2^q$ possible elements. We write $q^\star$ for the target lossless precision, and $\mW^{(q^\star)}$ for the corresponding target object. For instance, $q^\star=32$ for full-precision weights, though precision choices $q \leq 8$ are increasingly relevant for compute-efficient models~\citep{gholami2021surveyquantizationmethodsefficient}. 
Since $\mW^{(q)}$ must be decomposed into smaller blocks supported by CTM tables, the choice of quantized representation determines which structures are exposed and therefore affect the theoretical gap between the KCS complexity of the target object $\mW^{(q^\star)}$ and the estimated object $\mW^{(q)}$.

\paragraph{Lossless Exposure of Objects.}\label{paragraph:exposure}
One possible lossless representation of Eq.~\eqref{eq:object_quantized} is the serialized bit string $\mW^{(q)}_\text{ser} \in \{0,1\}^{d\cdot q}$, which for block size $\pi$ exposes $m = d\cdot q / \pi$ blocks. The KCS complexity of the original non-binary object and the CTM-estimate of the corresponding serialized bit-stream may be different due to the limitations of finite CTM tables (see Sec.~\ref{sec:ait}).
In contrast, binarization $\mW_\text{bin} \in \{0,1 \}^d$ exposes $q$ times fewer blocks but yields an unreliable estimate that differs significantly from that of the target object\footnote{When binarization is applied through the sign function, it defines a different target object. The additive-constant invariance of KCS complexity applies only to lossless changes of description of the same object~\citep{kolmogorov1965three}; a sign map is many-to-one and therefore falls outside that setting. However, we include it in this work for completeness.}.

The bit-plane representation uses the binary expansion of each quantized weight
\begin{equation}\label{eq:bitplane1}
w_i^{(q)} = \sum_{\ell=0}^{q-1} 2^\ell \beta_{i,\ell},
\qquad
\beta_{i,\ell} \in \{0,1\},
\qquad
\boldsymbol{\beta_\ell} := (\beta_{1, \ell}, \ldots,\beta_{d,\ell}) \in \{0,1\}^d
\end{equation}
where $\beta_{i,q-1}$ denotes the Most Significant Bit (MSB) and $\beta_{i,0}$ denotes the Least Significant Bit (LSB) of $w_i^{(q)}$. The vector $\boldsymbol{\beta}_\ell$ collects the $\ell$-th bit of each of the $d$ symbols, forming the $\ell$-th bit-plane. The target object can thus be exposed one bit-plane at a time. The representation is lossless when $q = q^\star$ since the bit-planes jointly and uniquely determine $\mW^{(q)}$. The structural information captured by bit-planes has furthermore been shown to be useful for downstream tasks~\citep{nhan2025improvement}.
\looseness=-1

Interestingly, bit-planes can also be used to reveal structural information progressively. For $1 \le k \le q$, define the $k$-bit MSB prefix index as
\begin{equation}\label{eq:bitplane_prefix}
c_i^{(k)} := \left\lfloor \frac{w_i^{(q)}}{2^{,q-k}} \right\rfloor = \left\lfloor \frac{c_i^{(k+1)}}{2} \right\rfloor ,~~~\text{and}~~~ w_i^{(q)}\in \{c2^{q-k},\dots,(c+1)2^{q-k}-1 \}
\end{equation}
when $c_i^{(k)}=c$. 
For instance, if $q^\star=4$ and $w_i^{(4)}=13=1101_2$, then retaining one bit gives the interval $\{8,\dots,15\}$, retaining two bits gives $\{12,\dots,15\}$, and retaining three bits gives $\{12,13\}$.

\subsection{Quantized Block Decomposition (QuBD) Complexity}
A fixed lossless representation changes the KCS complexity of an object only by an additive constant~\citep{kolmogorov1965three}. In contrast, retaining only the first $q$ bit-planes for $1 \le q < q^\star$ is not lossless. Crucially, even when $q=q^\star$, the object cannot be estimated directly and must be estimated from its decomposition, so its KCS complexity depends on the chosen representation. We consider the change in target object induced by quantization and ask: \emph{What is the loss in KCS complexity between the target object and the estimated quantized object?}

To formalize this estimation gap, let $x^\star$ be the target object and consider its decomposition: 
\begin{equation}
y_q := \floor{\frac{x^\star}{2^{q^\star-q}}} = (y_{q,1},\ldots,y_{q,d}) ,
\qquad
y_{q,i} = \floor{\frac{x_i^\star}{2^{q^\star-q}}}
= \sum_{\ell=q^\star-q}^{q^\star-1} 2^{\ell-(q^\star-q)} \beta_{i,\ell}^\star .
\end{equation}
Here $y_q$ retains exactly the $q$ MSB-planes of $x^\star$, while discarding the lower $(q^\star - q)$-bit planes. Let $\delta_q = (\delta_{q,1},\ldots,\delta_{q,d})$ denote the omitted lower-bit residuals, defined by
\begin{equation}
\delta_q := x^\star - 2^{q^\star-q} y_q .
\end{equation}
Here, $\delta_{q,i} = x_i^\star - 2^{q^\star-q} y_{q,i}$ and $0 \le \delta_{q,i} < 2^{q^\star-q}$ for each $i$. Since $y_q$ and $\delta_q$ capture the retained and omitted $q$ bit planes, respectively, the target object can be estimated as the composite $(y_q,\delta_q)$. We analyze the theoretical gap between $x^\star$ and $(y_q,\delta_q)$ using standard information-symmetry identities of KCS complexity from~\citet{li2008introduction}, summarized in Appx.~\ref{app:kc_facts}.

\begin{theorem}[Residual Decomposition]\label{thm:res_decomp}
For a target object $x^\star$ with precision $1 \le q \le q^\star$,
\begin{equation}\label{thm:res_decomp_eq1}
\Ccal(x^\star) = \Ccal(y_q) + R_q + O(\log q^\star d),
\qquad
R_q := \Ccal(\delta_q \mid y_q),
\end{equation}
where $0 \le R_q \le (q^\star-q)d + O(1)$. So, estimating $x^\star$ from $y_q$ misses the complexity due to the residual information $R_q$, up to a logarithmic term. If $q=q^\star$, then $\delta_q=0$ and $R_q=O(1)$. Proof in Appx.~\ref{proof:res_decomp}.
\end{theorem}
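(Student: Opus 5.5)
The argument rests on two facts: $x^\star$ and the pair $(y_q,\delta_q)$ are computably interconvertible, and the chain rule (symmetry of information) from Appx.~\ref{app:kc_facts} applies to that pair. The plan has three steps, followed by the bounds on $R_q$.

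\textbf{Step 1: pass from $x^\star$ to the pair.} Both directions are computable given $q$, $q^\star$ and $d$. The pair is obtained by $y_q=\lfloor x^\star/2^{q^\star-q}\rfloor$ and $\delta_q=x^\star-2^{q^\star-q}y_q$, and $x^\star$ is recovered by $x^\star=2^{q^\star-q}y_q+\delta_q$. A fixed program that reads a self-delimiting encoding of $(q,q^\star,d)$ and applies either map therefore gives
\[
\Ccal(x^\star)=\Ccal(y_q,\delta_q)+O(\log q^\star d).
\]
The parameters cost $O(\log q + \log q^\star + \log d)=O(\log q^\star d)$ bits, since $q\le q^\star$.

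\textbf{Step 2: apply the chain rule.} I would invoke the symmetry-of-information identity $\Ccal(a,b)=\Ccal(a)+\Ccal(b\mid a)+O(\log \Ccal(a,b))$ with $a=y_q$ and $b=\delta_q$. Since $\Ccal(y_q,\delta_q)\le q^\star d+O(\log q^\star d)$ (literal description of $x^\star$ plus Step 1), the error term $O(\log\Ccal(a,b))$ is $O(\log q^\star d)$. Combining with Step 1 gives Eq.~\eqref{thm:res_decomp_eq1}. This is the step I expect to be the main obstacle. The exact form of the logarithmic term depends on which variant of the identity is listed in Appx.~\ref{app:kc_facts}: prefix versus plain complexity, and whether one conditions on $a$ or on $a^*$. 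I would state it in the plain-complexity version with $O(\log n)$ error, where $n$ bounds the total length, so that everything collapses into a single $O(\log q^\star d)$.

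\textbf{Step 3: bound $R_q$.} Nonnegativity is immediate. For the upper bound, note that each $\delta_{q,i}$ lies in $\{0,\dots,2^{q^\star-q}-1\}$. Given $y_q$ (hence $d$) and $q^\star-q$, the vector $\delta_q$ is therefore specified by its literal $(q^\star-q)d$-bit concatenation. This yields $R_q\le (q^\star-q)d+O(1)$, with the length information either read from the condition or absorbed into the lower-order term consistently with Step 2.

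\textbf{The case $q=q^\star$.} Here $2^{0}=1$, so $y_q=x^\star$ and $\delta_q=0^d$. Given $y_q$, a constant program outputs the all-zero vector of the same length, so $R_q=O(1)$.
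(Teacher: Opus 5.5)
Your proposal is correct and takes the same route as the paper's proof. Both arguments use the interconvertibility of $x^\star$ and $(y_q,\delta_q)$, then the chain rule with the self-delimiting overhead absorbed into $O(\log(q^\star d))$, then a literal-length bound on $\Ccal(\delta_q \mid y_q)$, and finally $\delta_{q^\star}=0$. The differences are cosmetic: you charge $O(\log q^\star d)$ rather than $O(1)$ for the conversion (to pay for $q, q^\star, d$), and you use the logarithmic-error chain rule directly instead of first conditioning on $\Ccal(y_q)$ and then dropping it; both fold into the same error term.
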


Estimating $(y_q, \delta_q)$ instead results in only a logarithmic error in bit length and in the residual information from the omitted bits. The following analysis demonstrates that this residual diminishes as additional bit-planes are incorporated.

\begin{theorem}[Bit Plane Residual Loss]
\label{thm:add_bitplane}
For $1 \le q < q^\star$, let
\begin{equation}
b_{q+1} := (\beta^\star_{1,q^\star-q-1},\ldots,\beta^\star_{d,q^\star-q-1}) \in \{0,1\}^d ,
\end{equation}
where $b_{q+1}$ is the next MSB plane after the $q$ retained in
$y_q$. Then $y_{q+1} = 2y_q + b_{q+1}$, and
\begin{equation}
\begin{aligned}
R_q &\ge R_{q+1} - O(\log(q^\star d)) .
\end{aligned}
\end{equation}
Thus, adding the $(q+1)$st bit-plane cannot increase the residual loss, and more importantly, the residual loss decreases whenever bit-planes are not identical. Proof in Appx.~\ref{proof:add_bitplane}.
\end{theorem}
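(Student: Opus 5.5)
I will prove Theorem~\ref{thm:add_bitplane} in two parts: first the algebraic identity, then a chain-rule (information-symmetry) argument for the residuals, using the facts from Appx.~\ref{app:kc_facts}.

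\textbf{The identity $y_{q+1}=2y_q+b_{q+1}$.} This is pure arithmetic on the binary expansions. Expand $y_{q+1,i}=\sum_{\ell=q^\star-q-1}^{q^\star-1}2^{\ell-(q^\star-q-1)}\beta^\star_{i,\ell}$ and split off the $\ell=q^\star-q-1$ term. That term is $\beta^\star_{i,q^\star-q-1}=b_{q+1,i}$. The remaining sum equals $2\sum_{\ell=q^\star-q}^{q^\star-1}2^{\ell-(q^\star-q)}\beta^\star_{i,\ell}=2y_{q,i}$.

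Two consequences follow, each via a constant-size program:
\begin{itemize}
\item $y_{q+1}\leftrightarrow (y_q,b_{q+1})$ is a computable bijection. In particular $y_q=\lfloor y_{q+1}/2\rfloor$ and $b_{q+1}=y_{q+1}\bmod 2$.
\item The residuals satisfy $\delta_q=2^{q^\star-q-1}b_{q+1}+\delta_{q+1}$, with $0\le\delta_{q+1,i}<2^{q^\star-q-1}$. Hence $(b_{q+1},\delta_{q+1})$ is computable from $\delta_q$, and conversely.
\end{itemize}

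\textbf{The residual comparison.} I will apply Theorem~\ref{thm:res_decomp} at levels $q$ and $q+1$. Both $1\le q<q+1\le q^\star$ are admissible, so
\begin{equation*}
\Ccal(y_q)+R_q=\Ccal(x^\star)+O(\log q^\star d)=\Ccal(y_{q+1})+R_{q+1}.
\end{equation*}
Rearranging gives $R_q-R_{q+1}=\Ccal(y_{q+1})-\Ccal(y_q)+O(\log q^\star d)$.

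It therefore suffices to show $\Ccal(y_{q+1})\ge\Ccal(y_q)-O(1)$. This holds because $y_q=\lfloor y_{q+1}/2\rfloor$ is a fixed computable function of $y_{q+1}$, and applying a computable map can raise complexity by at most $O(1)$. Substituting yields $R_q\ge R_{q+1}-O(\log(q^\star d))$.

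As a sanity check, the symmetry of information gives the sharper statement
\begin{equation*}
R_q-R_{q+1}=\Ccal(b_{q+1}\mid y_q)+O(\log q^\star d).
\end{equation*}
This can be derived directly: write $\Ccal(\delta_q\mid y_q)=\Ccal(b_{q+1},\delta_{q+1}\mid y_q)+O(1)$, apply the conditional chain rule, and note that conditioning on $(y_q,b_{q+1})$ is equivalent to conditioning on $y_{q+1}$. This sharper form justifies the qualitative remark in the statement. The residual strictly decreases, beyond logarithmic slack, exactly when the new plane $b_{q+1}$ carries information not already determined by the previous planes. For example, it is nonzero when $b_{q+1}$ is not a copy of (or simply computable from) earlier planes.

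\textbf{Main obstacle.} The delicate step is the bookkeeping of logarithmic terms. Both Theorem~\ref{thm:res_decomp} and the conditional chain rule for prefix complexity carry $O(\log)$ errors in the lengths of the objects involved. I need all of these to be bounded by $O(\log(q^\star d))$, which requires $d$ and $q^\star$ to be recoverable from the conditioned objects at logarithmic cost.

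My plan is to encode $(d,q^\star)$ with a self-delimiting header of length $O(\log q^\star+\log d)$. I then absorb it uniformly into every application, so that the constants combine additively into a single $O(\log(q^\star d))$ term.
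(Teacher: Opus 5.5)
Your proof is correct, but your main argument for the inequality takes a different route from the paper's.

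\textbf{The paper's route.} The paper does not use Theorem~\ref{thm:res_decomp} here. It uses $\delta_q = 2^{q^\star-q-1}b_{q+1}+\delta_{q+1}$ to write $R_q=\Ccal(b_{q+1},\delta_{q+1}\mid y_q)+O(1)$. It then applies the conditional chain rule and swaps the condition $(y_q,b_{q+1})$ for $y_{q+1}$. This gives $R_q-R_{q+1}=\Ccal(b_{q+1}\mid y_q)+O(\log(q^\star d))$, and the inequality follows from nonnegativity. That derivation is exactly what you present as your ``sanity check.''

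\textbf{Your route.} You subtract Theorem~\ref{thm:res_decomp} at precisions $q$ and $q+1$, which is the same first step the paper uses in its proof of Cor.~\ref{cor:cumulative_refinement}. You then need only one elementary fact: $y_q=\lfloor y_{q+1}/2\rfloor$ is a fixed computable image of $y_{q+1}$, so $\Ccal(y_q)\le\Ccal(y_{q+1})+O(1)$. If the encoding of $y_{q+1}$ does not carry $d$ and the symbol width, the overhead is $O(\log(q^\star d))$ instead, which your header plan absorbs.

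\textbf{What each buys.} Your route is shorter and needs no further conditional chain-rule bookkeeping, since all of it is already paid for inside Theorem~\ref{thm:res_decomp}. The price is that it gives only the one-sided bound and says nothing about the size of the drop. On its own, it cannot justify the claim that the residual strictly decreases. The paper's decomposition does not depend on Theorem~\ref{thm:res_decomp}, identifies the drop as $\Ccal(b_{q+1}\mid y_q)$, and telescopes directly into the sum in Cor.~\ref{cor:cumulative_refinement}. Because you also include that derivation, your write-up covers both the inequality and the qualitative remark. Your condition for strict decrease (that $b_{q+1}$ is not computable from earlier planes beyond the logarithmic slack) is also more accurate than the statement's ``not identical.''
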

Repeated application of Thm.~\ref{thm:add_bitplane} shows that if $q=q^*$ only a constant error remains; we refer to Cor.~\ref{cor:cumulative_refinement} and its proof in Appx.~\ref{proof:cumulative_refinement}. Importantly, when $q>1$, the bit-plane decomposition yields a strictly smaller residual than one-bit binarization.

We validate our findings in Appx.~\ref{app:residual_validation} (Fig.~\ref{fig:qubd_bitplane_loss}) by comparing the estimation gap to the reference CTM-table complexity across different representations. The figure shows that this gap narrows as QubD retains more bit-planes. 
We next demonstrate how QuBD scales CTM-tables, providing more informative estimates at larger scales.

\subsection{Saturation in Finite-state Machines}
\label{sec:saturation}
Let the binary block size $\pi$ be fixed, and define $S_\pi\subseteq \{0,1\}^{\pi}$ as the finite CTM support. For an exposure $z_{q,r}=\phi(\mW^{(q)})$, the number of
blocks $m$ is determined after the quantized object is exposed by representation $r$ and partitioned into binary blocks, $\Pi_\pi(z_{q,r})=(\w_1,\ldots,\w_m)$, each of size $\pi$. The same $d$ quantized symbols may yield a different number of binary blocks depending on $r$. Define $S_{\pi,r}^{(q)}\subseteq S_\pi$ as the set of supported blocks reachable under this exposure, and let $a_\pi(z_{q,r})$ denote the number of reachable blocks that appear among $(\w_1,\ldots,\w_m)$. Define
\begin{equation}
H_{\pi,r}^{(q)}:=|S_{\pi,r}^{(q)}|,
\qquad
\sigma_{\pi,r}(m):=
\frac{\E[a_\pi(z_{q,r})]}{H_{\pi,r}^{(q)}} .
\end{equation}
Here, $\sigma_{\pi,r}(m)$ represents the expected fraction of reachable support that appears after $m$ exposed blocks. The estimate is considered $(1-\epsilon)$-saturated when $\sigma_{\pi,r}(m)\ge 1-\epsilon$. Additional details are provided in Appx.~\ref{app:saturation_details}.

\begin{proposition}[Finite-table Saturation] \label{prop:finite_saturation}
Assume the $m$ exposed blocks are sampled independently and uniformly from
$S_{\pi,r}^{(q)}$. Then
\begin{equation}
\sigma_{\pi,r}(m)
=
1-\left(1-\frac{1}{H_{\pi,r}^{(q)}}\right)^m
\approx
1-e^{-m/H_{\pi,r}^{(q)}} ,
\qquad
m_\epsilon \gtrsim H_{\pi,r}^{(q)}\log\frac{1}{\epsilon},
\end{equation}
where $m_\epsilon$ is the number of exposed blocks expected for
$(1-\epsilon)$-saturation. Proof in Appx~\ref{proof:finite_saturation}.
\end{proposition}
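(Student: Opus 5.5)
The plan is the standard coupon-collector computation via linearity of expectation. Write $H := H_{\pi,r}^{(q)}$ and enumerate the reachable support as $S_{\pi,r}^{(q)} = \{s_1,\ldots,s_H\}$. We express the coverage count as a sum of indicators,
\[
a_\pi(z_{q,r}) = \sum_{j=1}^{H} \mathbf{1}\{\exists\, t\le m : \w_t = s_j\},
\]
so that $\E[a_\pi(z_{q,r})] = \sum_{j=1}^H \Prob(s_j \text{ appears among } \w_1,\ldots,\w_m)$. No distributional structure beyond the stated assumption is needed for this step.

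\textbf{Step 1: miss probability.} Under the assumption that the $m$ blocks are i.i.d.\ uniform on $S_{\pi,r}^{(q)}$, a fixed $s_j$ is missed by a single draw with probability $1-1/H$. By independence, it is missed by all $m$ draws with probability $(1-1/H)^m$. Hence each indicator has expectation $1-(1-1/H)^m$. Summing over the $H$ elements and dividing by $H$ gives exactly
\[
\sigma_{\pi,r}(m) = 1-(1-1/H)^m .
\]

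\textbf{Step 2: exponential approximation.} Use $\log(1-x) = -x - O(x^2)$, or more cleanly the bound $1-x\le e^{-x}$. This gives
\[
(1-1/H)^m = e^{m\log(1-1/H)} = e^{-m/H}\bigl(1+O(m/H^2)\bigr),
\]
which justifies the $\approx$ whenever $H$ is large relative to $\sqrt m$. In any case we have the one-sided bound $\sigma_{\pi,r}(m)\ge 1-e^{-m/H}$, and this is the direction relevant to saturation.

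\textbf{Step 3: saturation threshold.} Impose $\sigma_{\pi,r}(m)\ge 1-\epsilon$. By the bound from Step 2, it suffices that $e^{-m/H}\le\epsilon$, i.e.\ $m\ge H\log(1/\epsilon)$. Conversely, $-\log(1-1/H)\le 1/H + 1/H^2 + \cdots = 1/(H-1)$ gives $(1-1/H)^m \ge e^{-m/(H-1)}$. So saturation requires $m\ge (H-1)\log(1/\epsilon)$, and therefore $m_\epsilon$ is $H\log(1/\epsilon)$ up to a $1+O(1/H)$ factor. This is the sense of $\gtrsim$ in the statement.

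\textbf{Expected obstacle.} There is no real mathematical obstacle; the main care point is interpretive. The uniform-independence assumption is what makes the computation exact, and the $\gtrsim$ / $\approx$ should be stated as a two-sided estimate with the $1/(H-1)$ versus $1/H$ slack made explicit rather than as an identity.
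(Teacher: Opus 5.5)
Your proof is correct and follows essentially the same route as the paper's. Both arguments use an indicator for each reachable block, linearity of expectation, the miss probability $(1-1/H)^m$ from independence and uniformity, and then invert $\sigma_{\pi,r}(m)\ge 1-\epsilon$. The one difference is that the paper approximates with $(1-x)^m\approx e^{-mx}$ and $\log(1-x)\approx -x$, whereas you use the explicit bounds $1-x\le e^{-x}$ and $-\log(1-1/H)\le 1/(H-1)$. This makes the paper's $\gtrsim$ precise as $(H-1)\log(1/\epsilon)\le m_\epsilon\le H\log(1/\epsilon)$, up to rounding.
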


By Prop.~\ref{prop:finite_saturation}, exposure $r$ saturates when
$m_r\approx H_{\pi,r}^{(q)}\log(1/\epsilon)$. For $d$ quantized symbols,
$m_{\mathrm{bin}}=d/\pi$, $m_{\mathrm{ser}}=d\cdot q/\pi$, and
$m_{\mathrm{plane}}=d/\pi$ per bit-plane, which yields
\begin{equation}
d_{\mathrm{bin}}\gtrsim \pi H_{\pi,\mathrm{bin}}\log\frac{1}{\epsilon},
\qquad
d_{\mathrm{ser}}\gtrsim \frac{\pi H_{\pi,\mathrm{ser}}^{(q)}}{q}\log\frac{1}{\epsilon},
\qquad
d_{\mathrm{plane}}\gtrsim \pi H_{\pi,\mathrm{plane}}\log\frac{1}{\epsilon}.
\end{equation}
While binarization saturates more slowly, it incurs a larger residual loss. In contrast, serialization is lossless but saturates earlier, as it exposes $q$ times more blocks to a single finite table. Bit-plane exposure estimates the same target object by scaling the support according to the number of bit-planes while evaluating local structures independently across its bit-planes. The corresponding saturation rates are illustrated in Appx.~\ref{app:saturation_exposure} (Fig.~\ref{fig:saturation_limits}).

\paragraph{When Representations Fail.}
The problem with the saturation effect (Sec.~\ref {sec:saturation}) is that $a_\pi(z_{q,r})$ stops growing while $m$ keeps increasing, to the point where only multiplicity is counted (recall Eq.\eqref{eq:kc_bdm}). While this is reasonable for simple objects, it is not for structurally rich or large objects, where additional local structure can no longer be captured. Hence, the estimate becomes uninformative, similar to the entropy collapse described in~\cite{sakabe2025evaluating}.  

Representation failure is illustrated in Fig.~\ref{fig:structure_estimate}, which compares a structured $8$-ary object $x=(x_1,\ldots,x_d)$ with its partially permuted counterpart $x_\rho$ using $\pi = (4\times 4)$, where $\rho$ is the fraction of positions permuted.  Let $n_a:=|\{i:x_i=a\}|$ be the number of occurrences of symbol $a\in\Sigma_q$ in $x$. 
\begin{wrapfigure}{r}{0.5\textwidth}
    \centering
    \includegraphics[width=\linewidth]{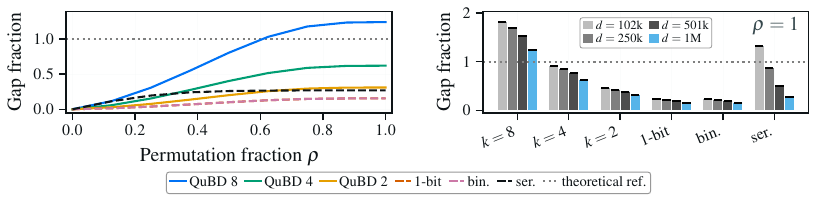}
    \caption{Complexity gap between estimations of an ordered $8$-ary object and its permutation, where $\rho$ is the fraction of symbols $d$ permuted. The gap is shown \textbf{(left)} for an object with $d = 1M$ symbols as $\rho$ increases, and \textbf{(right)} across object sizes for different methods when $\rho =1$. }
    \label{fig:structure_estimate}
    \vspace{-0.25cm}
\end{wrapfigure}
Since the permutation preserves all symbol counts and alters only their positions, any difference in complexity estimates must arise from the change in arrangement. The information required to identify one such arrangement is $\log_2(d!/\prod_a n_a!)$ bits
\citep{cover1999elements,li2008introduction}, which serves as a reference for the complexity added by the permutation. The values in Fig.~\ref{fig:structure_estimate} are normalized by the object type class described in~\cite{cover1999elements}. An informative estimator should assign higher complexity to $x_\rho$ than to $x$, with the gap increasing in $\rho$. If the KCS complexity gap $\Ccal(x_\rho)-\Ccal(x)$ collapses toward zero, the estimator fails to identify structure. As shown in Fig.~\ref{fig:structure_estimate}, QuBD maintains a larger and more stable estimation gap than serialization, one-bit, and sign-binarization. Additional results are provided in Appx.~\ref{fig:rep_failure}.

\section{Characterizing Learning in Deep Neural Networks}
\label{sec:learning}

We normalize the complexity values relative  to a random object of the same length, denoted $\Delta\Ccal_\text{QuBD}$, for easier interpretation:
\begin{equation}
\Delta\Ccal_\text{QuBD} = \frac{\Ccal_{\text{QuBD}}(f_\text{PT})}{\Ccal_{\text{QuBD}}(f_\text{RND})}.
\label{eq:delta_c}
\end{equation}
A truly random object has $\Delta\Ccal_\text{QuBD}\approx 1$, whereas an algorithmically simple object\footnote{Strictly speaking, $\Delta\Ccal_\text{QuBD}\approx 0$ holds only for sufficiently large objects, as even maximally simple objects such an an all-zero string retain some complexity. For objects of sizes ($8 \times 8$), ($16 \times 16$), ($32 \times32$), $\Delta\Ccal_\text{QuBD}$ for an all zeros string is 0.19, 0.05, and 0.01, respectively.} has $\Delta\Ccal_\text{QuBD}\approx 0$.

\paragraph{Learning Reduces Algorithmic Complexity of Networks.} Neural network weights are initialized randomly, and from an algorithmic complexity perspective, constitute highly complex objects because they lack structure and therefore cannot be compressed. This raises a natural question: {\em What happens to the algorithmic complexity of the weights as networks train?}

\begin{figure}[htbp]
    \centering
    \includegraphics[width=0.325\linewidth]{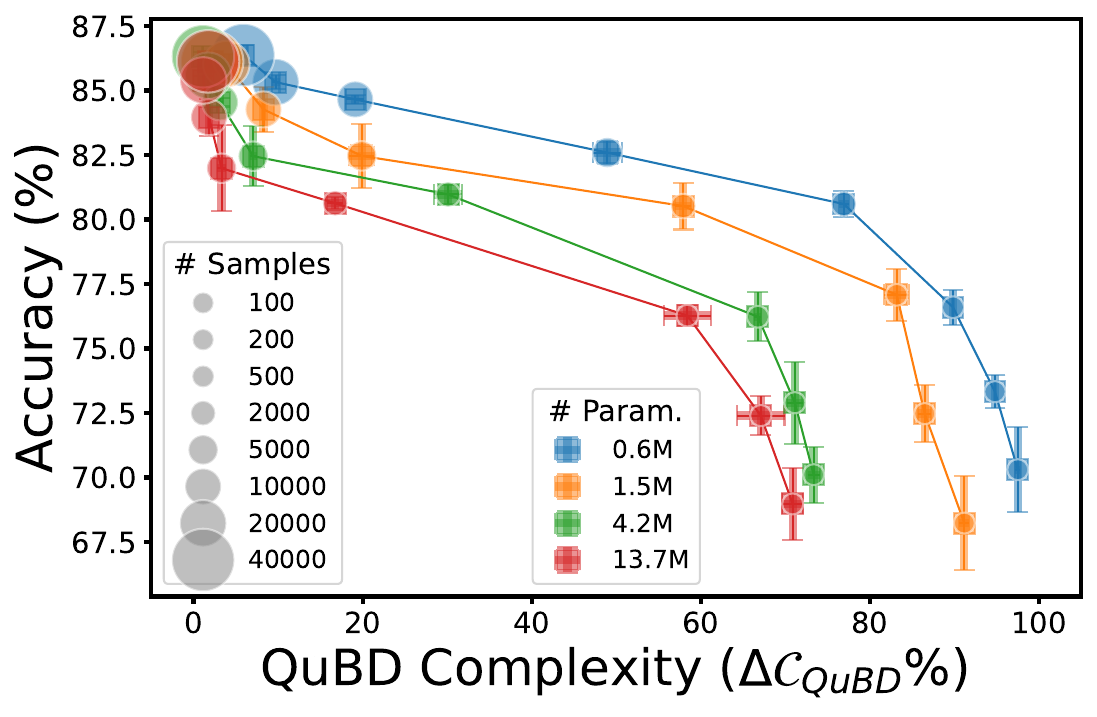}
    \includegraphics[width=0.325\linewidth]{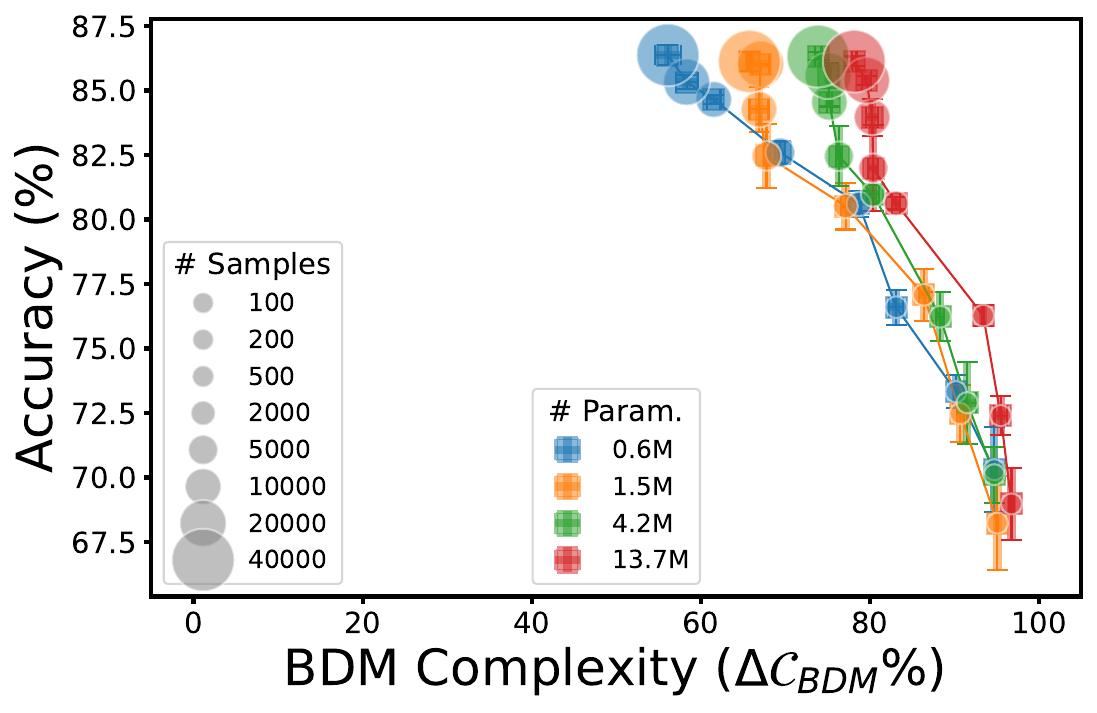}
    \includegraphics[width=0.325\linewidth]{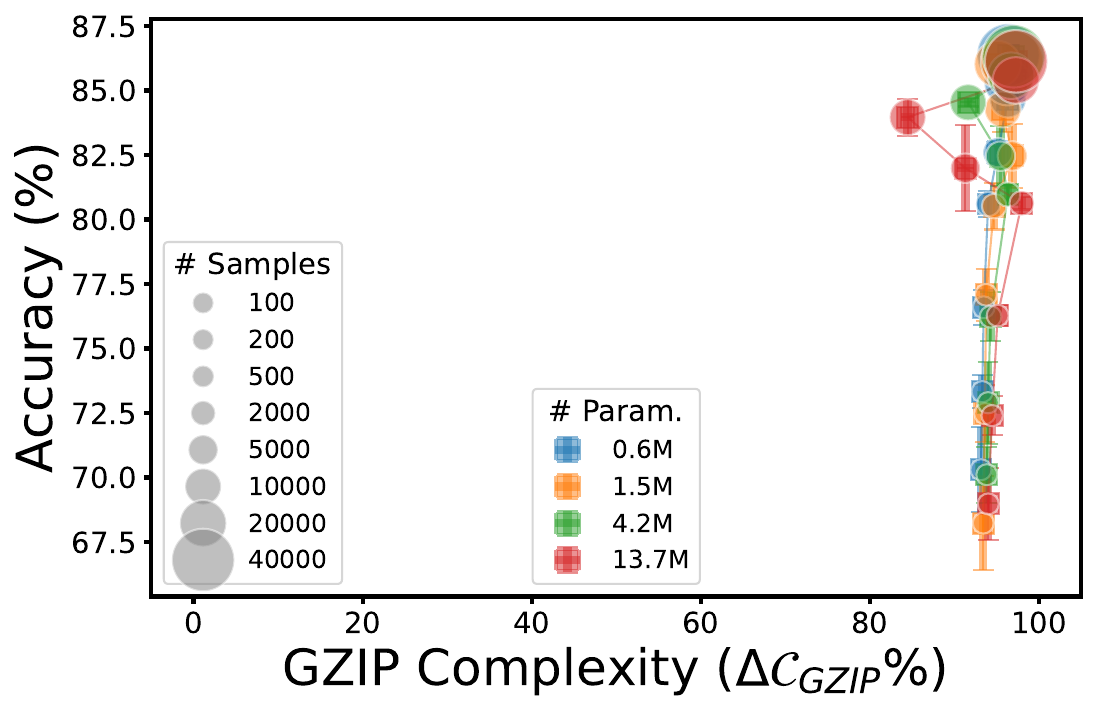}
    \caption{Complexity of trained models compared to random initialization when using QuBD, BDM, and GZIP-based estimations, measured as $\Delta \Ccal$ based on  Eq.~\ref{eq:delta_c}. Four MLPs with different numbers of trainable parameters ($0.6M-13.7M$) are trained with an increasing number of training data points and validated on the same test set. QuBD complexity tracks the connection between learning and algorithmic complexity better than BDM or GZIP. 
    }
    \label{fig:data-budget}
\end{figure}

In Fig.~\ref{fig:data-budget} \textbf{(left)}, we present clear empirical evidence that the algorithmic complexity of neural networks measured using the QuBD (Sec.~\ref {sec:method}) decreases with training. Multi-layered perceptrons (MLPs) with varying numbers of trainable parameters, ranging from $0.6M$-$13.7M$, are trained on the Fashion-MNIST dataset \citep{xiao2017fashion} with an increasing number of training samples and evaluated on the same validation set. For each model, the test accuracy improves with more training data, and the normalized algorithmic complexity $\Delta \Ccal$ decreases accordingly, indicating that the improvements in test performance correlate with reductions in algorithmic complexity. This effect is most pronounced for the largest model, whose QuBd complexity reduces to approximately 5\% of its value at initialization.

Fig.~\ref{fig:data-budget} \textbf{(center)} and Fig.~\ref{fig:data-budget} \textbf{(right)} report two commonly used complexity measures for comparison: BDM with weight binarization~\citep {sakabe2025evaluating,bakhtiarifard2026momos}, and raw weight compression using GZIP. As clearly seen in these plots, neither measure correlates meaningfully with learning, in contrast to QuBD. The results shown here are when using 8-bit planes for QuBD. Additional experimental details are provided in Appx~\ref{app:pareto}.

\paragraph{Algorithmic Complexity Tracks Generalization.} Fig.~\ref{fig:training} probes into the evolution of algorithmic complexity reduction during training for MLP trained on Fashion MNIST~\citep{xiao2017fashion} and Tiny-ViT~\citep{tinyvit} trained on CIFAR-10~\citep{krizhevsky2009learning}. In both cases, $\Delta\Ccal_\text{QuBD}$ tracks the generalization performance. The most interesting observation is in Fig.~\ref{fig:training} \textbf{(left)}, where the algorithmic complexity increases as the network starts to overfit. As overfitting is generally attributed to memorization, this suggests that the network is no longer simplifying its underlying algorithmic structure, leading to an increase in algorithmic complexity. For the Tiny-ViT case, QuBD complexity follows the learning curves, indicating that learning continues to reduce the algorithmic complexity of networks. Additional experimental details are provided in Appx.~\ref{app:learning_curve}.

\begin{figure}[t]
    \centering
    \includegraphics[width=0.32\linewidth]{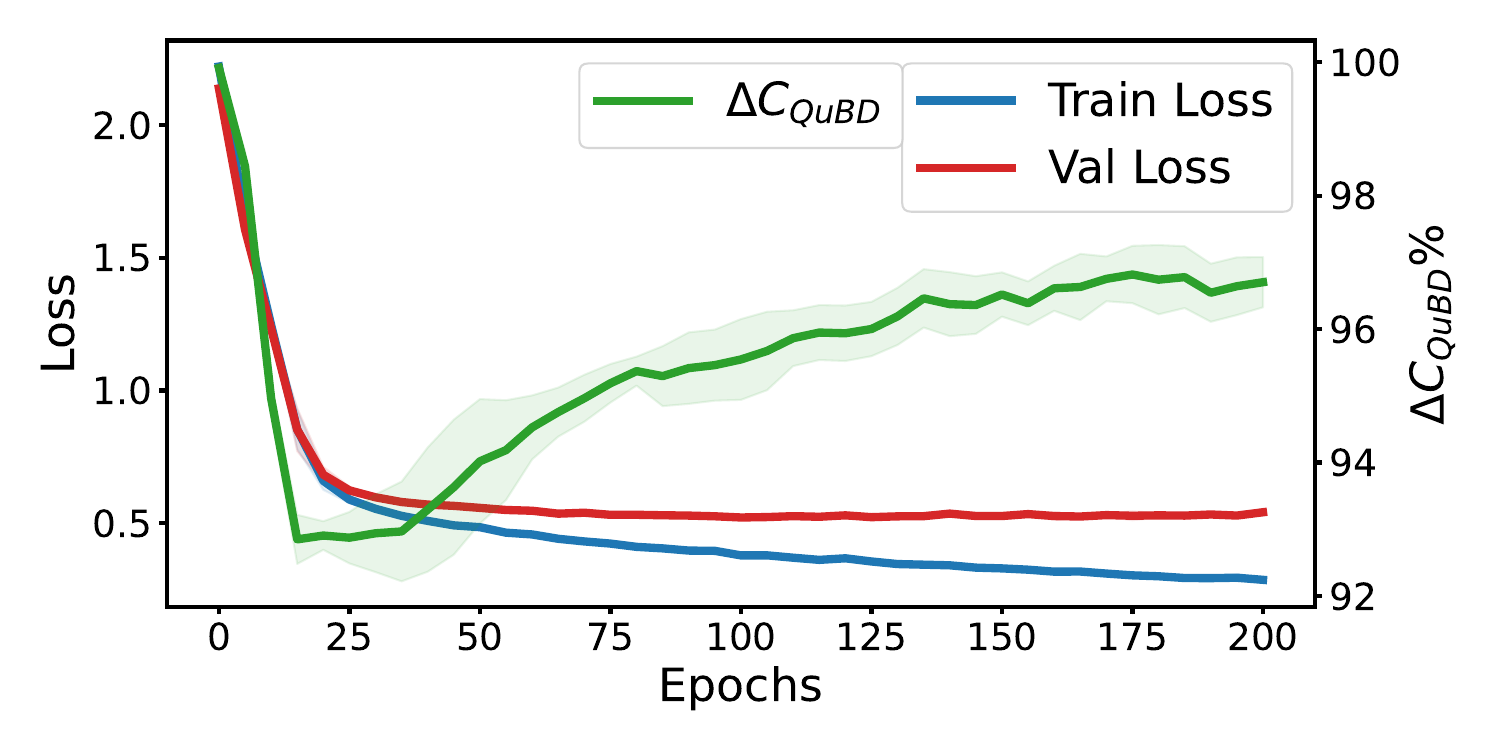}
    \includegraphics[width=0.32\linewidth]{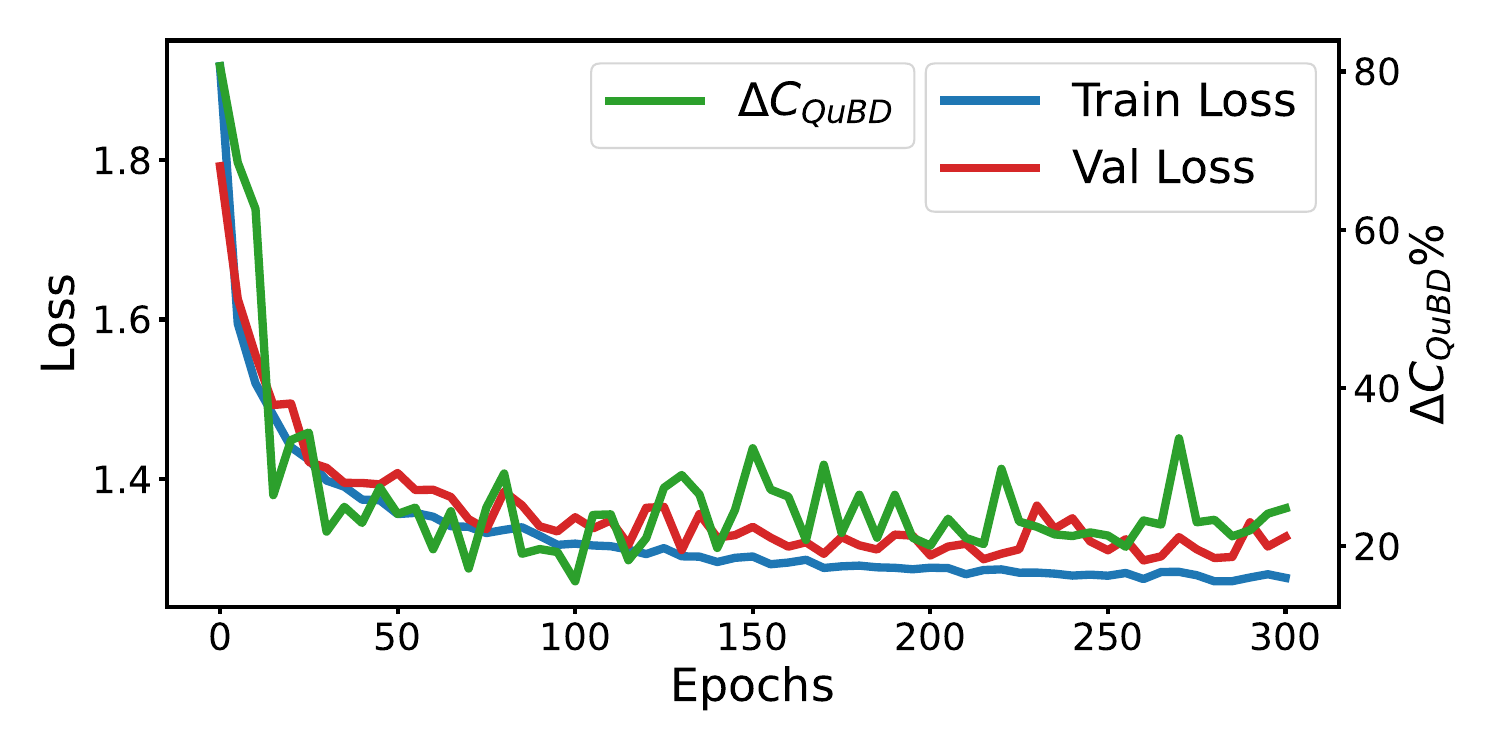}
    \includegraphics[width=0.34\linewidth]{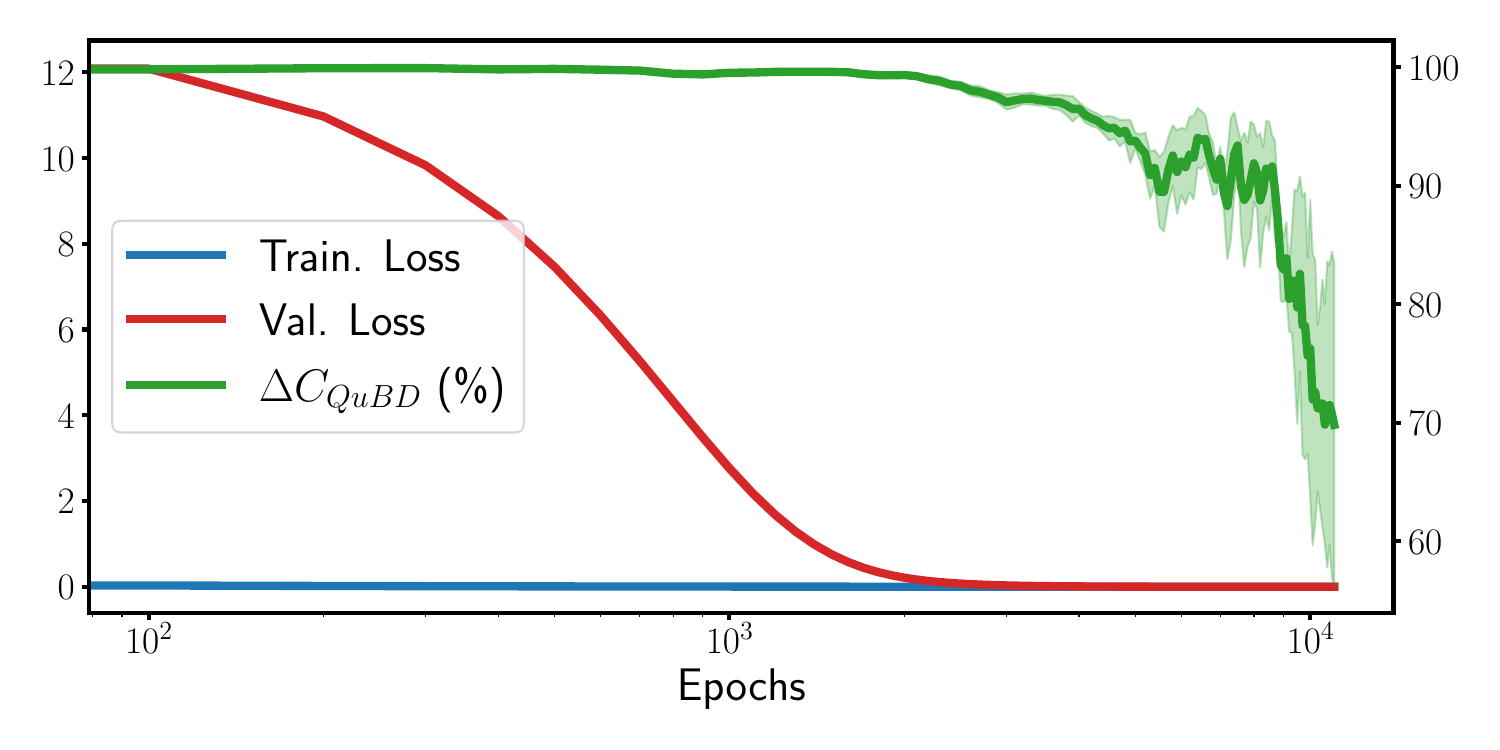}
    \vspace{-0.25cm}
    \caption{Learning curves along with $\Delta\Ccal_\text{QuBD}$ evolution for MLP \textbf{(left)} and Tiny ViT \textbf{(center)}. Algorithmic Complexity during \textbf{(right)} Grokking for the Modulo operator for $P=97$. 
    }
    \label{fig:training}
    \vspace{-0.5cm}
\end{figure}

\paragraph{Algorithmic Complexity during Grokking.} 
Grokking~\citep{power2022grokking} is a phenomenon observed in over-parameterized neural networks, where the networks generalize well past the point of overfitting, defying conventional machine learning principles~\citep{Belkin_2019}. This raises the question: {\em What happens to the algorithmic complexity of neural networks in such regimes?}

To study this, we set up grokking experiments for the modulo operator as described in~\cite{power2022grokking}, and measure the QuBD complexity during training, as shown in Fig.~\ref{fig:training} \textbf{(right)}. The network reaches zero training error within the first few epochs, whereas the validation loss drops to zero after about 1000 epochs, consistent with the findings reported in~\cite{power2022grokking} and other works. The algorithmic complexity, measured as the reduction in QuBD complexity, remains high during the high-validation-loss regime and decreases when the validation loss has reached zero. It continues to decrease even after the validation loss is zero, indicating further algorithmic simplification of the neural network. This is in contrast to the overfitting example demonstrated in Fig.~\ref{fig:training} \textbf{(left)}.

\begin{wrapfigure}{r}{0.35\textwidth}
\vspace{-0.5cm}
\centering
\includegraphics[width=0.95\linewidth]{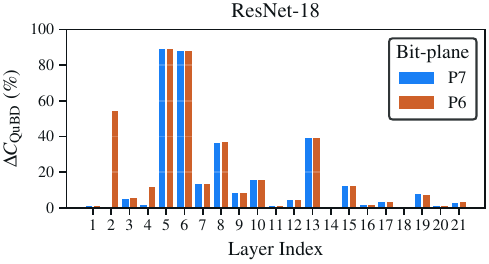}
\vspace{-0.2cm}
\caption{QuBD complexity ratio $\Delta\Ccal_\text{QuBD}$ per layer for ResNet-18 for the two MSB-planes (P7, P6).}
\label{fig:qubd_reduction_per_layer_resnet18}
\vspace{-0.4cm}
\end{wrapfigure}
\paragraph{Algorithmic Information Resides in Higher Bit-Planes.} QuBD complexity offers tighter estimates of KCS complexity of non-binary objects compared to BDM, due to Thm.~\ref{thm:res_decomp} and Thm.~\ref{thm:add_bitplane}. Adding bit-planes captures additional algorithmic information, but saturates beyond a few. This is illustrated in Fig.~\ref{fig:complexity}, where we show the relative complexity reduction achieved by the QuBD method per bit-plane. For the higher bit-planes, in particular the MSB, the normalized QuBD complexity $\Delta\Ccal_\text{QuBD}$ is lower for all the 100 models with $1M$ to $100M$ trainable parameters, whereas the lower bit-planes show no reduction in complexity. Additional experimental details can be found in Appx.~\ref{app:per_layer}. 

To use QuBD for models with high-dimensional weight tensors, e.g., 4D convolutional layers, the weight tensors must first be preprocessed into 2D matrices prior to estimation, as described in Appx.~\ref{app:weight_preprocess}. Using this, we examine the normalized complexity per layer of ResNet-18 (Fig.~\ref{fig:qubd_reduction_per_layer_resnet18}) and ResNet-50  (Appx.~\ref{app:qubd_reduction_per_layer_resnet50}) for the two MSB-planes (P7, P6). It is well established that different layers in a neural network have different importance, and that a uniform level of pruning or quantization can hamper performance. This variation across layers is clearly reflected in the QuBD-complexity reduction for both ResNet models. While there is no clear trend, some inner layers appear to exhibit significant complexity reduction. This could also explain why these inner layers are better suited for compression than input or output layers.
\looseness=-1

We have seen that lower bit-planes have less algorithmic information (see Fig.\ref{fig:complexity}), leading to the question: {\em If the lower bit-planes contain less algorithmic information, can these planes be discarded?}

\paragraph{QuBD Complexity as a Diagnostic for Compression.} 
\begin{wrapfigure}{r}{0.43\textwidth}
\vspace{0.1cm}
\centering
\includegraphics[width=0.43\textwidth]{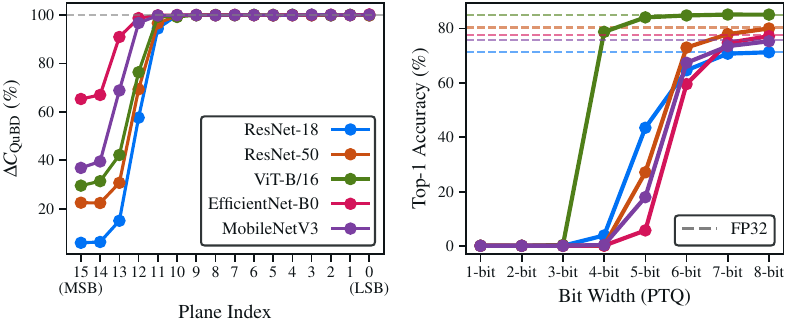}
\vspace{-0.5cm}
\caption{QuBD complexity $\Delta\Ccal_\text{QuBD}$ per bit-plane \textbf{(left)} and top-1 PTQ accuracy on ImageNet \textbf{(right)} for five pretrained models under 1-to-8-bit quantization. Dashed lines indicate FP32 baseline accuracy.}
\label{fig:qubd_ptq}
\vspace{-0.25cm}
\end{wrapfigure}
To test this hypothesis, we measure the QuBD complexity of five full precision (FP32) models: ResNet-18, ResNet-50, ViT, EfficientNet, and MobileNet, trained on ImageNet \citep{deng2009imagenet} using 16 bit-planes. We then perform post-training quantization (PTQ) on the same models with weight precision ranging from 1-bit to 8-bit and evaluate the quantized models on the ImageNet validation set. The first observation in Fig.~\ref{fig:qubd_ptq} \textbf{(left)} is that $\Delta\Ccal_\text{QuBD}\approx 100\%$ for the lower bit-planes, indicating they are close to random and might not contain any useful algorithmic information. Notably, meaningful complexity reduction is concentrated in the top five bit-planes. This correlates with the PTQ performance in Fig.~\ref{fig:qubd_ptq} \textbf{(right)}, which does not show large performance drops after 4-to-6-bit quantization, suggesting that the remaining bit-planes can be discarded without a performance penalty. Measuring KCS complexity using the QuBD method can serve as a diagnostic to determine PTQ quantization levels (Fig.~\ref{fig:qubd_ptq}) or which layers to prune (Fig. \ref{fig:qubd_reduction_per_layer_resnet18}). Additional experimental details are provided in Appx.~\ref{app:ptq}. 
\looseness=-1

\begin{wraptable}{r}{0.35\textwidth}
\vspace{-0.42cm}
\centering
\footnotesize
\begin{tabular}{@{}lrr@{}}
\toprule
{\bf Precision} & {\bf Accuracy} & $\Delta\Ccal_\text{QuBD}$\\ \midrule
 FP32 & 86.13 ± 0.46 & 95.51 ± 0.90 \\
 QAT-32 & 86.22 ± 0.44 & 96.39 ± 0.29\\
 \bottomrule
\end{tabular}
\vspace{0.10cm}
\caption{Comparison of FP32 and QAT with 32 bits.}
\label{tab:qat}
\vspace{-0.3cm}
\end{wraptable}
\paragraph{Influence of Quantization on QuBD Complexity.} 
Estimating the QuBD complexity entails a quantization step as described in Sec.~\ref{sec:method}. One concern about this transformation is whether the resulting KCS complexity is related to the actual model weights or whether quantization has an adverse impact. We verify this by comparing standard FP32 training with quantization-aware training (QAT) at 32-bit precision, and measuring the QuBD complexity using 8-bit planes. Tab.~\ref{tab:qat} shows that there is no significant difference between the original model and the QAT model, both in terms of accuracy and QuBD complexity reduction. This is a key difference compared to other methods that rely on binarizing the weights of an FP32 network before estimating KCS complexity, as the two objects (FP32 and binary versions) may not be comparable in those settings.
\looseness=-1

\paragraph{Limitations.} We aggregate the algorithmic complexity per bit-plane in QuBD, which assumes independence across bit-planes. This might not be true as there could be mutual algorithmic information shared across bit-planes, which implies QuBD can be overestimating the underlying algorithmic complexity. While QuBD provides a tighter upper bound than BDM, accounting for inter bit-plane structure could yield even tighter KCS complexity estimations. 
Further, we demonstrated that QuBD can identify redundant bit-planes and that weight structures can be pruned based on QuBD complexity, suggesting its potential as a practical diagnostic for both PTQ level selection and pruning. A natural extension is to validate these applications within thorough compression experiments and benchmark them against established methods.

\vspace{-0.25cm}
\section{Conclusions}
\label{sec:conc}
\vspace{-0.25cm}
Several attempts have been made to explain learning in deep neural networks. We contribute a novel view based on algorithmic information theory, building on some of the recent works~\citep{sakabe2025evaluating,bakhtiarifard2026momos}. We operationalize KCS complexity estimation using the QuBD method. To our knowledge, this is the first scalable method to reliably estimate KCS complexity of large, non-binary bit strings. We have shown theoretically that the QuBD method yields a tighter KCS complexity estimate compared to binarization. Using this algorithmic complexity analysis tool, we have shown that learning in deep neural networks induces structure, or equivalently, reduces algorithmic complexity. We have shown, across a wide variety of experiments, that QuBD complexity correlates with generalization and phenomena such as grokking. We have demonstrated that higher bit planes carry more algorithmic structure than lower bit planes. This can be used to identify redundant model capacity, for example, by choosing appropriate PTQ levels. Algorithmic information theory is a powerful paradigm that has been creatively applied to prove Darwinian evolution~\citep{chaitin2013proving}, and we believe it can also be useful for explaining deep learning. In this work, we have contributed a concrete method that can further the investigations of characterizing learning in deep neural networks using algorithmic complexity analysis.

\paragraph{Broader Impact Statement.} This paper presents work whose goal is to advance the field of machine learning. There are many potential societal consequences of our work, none of which we feel must be specifically highlighted here.

\paragraph{Generative AI Usage Statement}

ChatGPT version 5.4 and Google Gemini were used to support programming tasks, including the development of scripts for visualization and setting-up experiments, to edit and refine language and grammar in selected sections of the manuscript.

\begin{ack}
All authors acknowledge funding from the European Union’s Horizon Europe Research and Innovation Action program under grant agreements No. 101070284, No. 101070408 and No. 101189771. SW and RS acknowledge funding from the Independent Research Fund Denmark (DFF) under grant agreement No. 4307-00143B. RS also acknowledges funding received under Independent Research Fund Denmark (DFF) under grant agreement number 4307-00143B. The Authors thank members of \href{https://saintslab.github.io/}{SAINTS Lab} for valuable discussions.
\end{ack}

\bibliographystyle{\PaperBibliographyStyle}
\bibliography{references}

@book{li2008introduction,
  title = {{An introduction to Kolmogorov complexity and its applications}},
  author={Li, Ming and Vit{\'a}nyi, Paul and others},
  volume={3},
  year={2008},
  publisher={Springer}
}

@article{paszke2019pytorch,
  title = {{Pytorch: An imperative style, high-performance deep learning library}},
  author={Paszke, Adam and Gross, Sam and Massa, Francisco and Lerer, Adam and Bradbury, James and Chanan, Gregory and Killeen, Trevor and Lin, Zeming and Gimelshein, Natalia and Antiga, Luca and others},
  journal={Advances in neural information processing systems},
  volume={32},
  year={2019}
}

@misc{bakhtiarifard2026momos,
      title = {{Algorithmic Simplification of Neural Networks with Mosaic-of-Motifs}},
      author={Pedram Bakhtiarifard and Tong Chen and Jonathan Wenshøj and Erik B Dam and Raghavendra Selvan},
      year={2026},
      eprint={2602.14896},
      archivePrefix={arXiv},
      primaryClass={cs.LG},
      url={https://arxiv.org/abs/2602.14896},
}

@article{Belkin_2019,
   title = {{Reconciling modern machine-learning practice and the classical bias–variance trade-off}},
   volume={116},
   ISSN={1091-6490},
   url={http://dx.doi.org/10.1073/pnas.1903070116},
   DOI={10.1073/pnas.1903070116},
   number={32},
   journal={Proceedings of the National Academy of Sciences},
   publisher={Proceedings of the National Academy of Sciences},
   author={Belkin, Mikhail and Hsu, Daniel and Ma, Siyuan and Mandal, Soumik},
   year={2019},
   month=jul, pages={15849–15854} }

@book{chaitin2013proving,
  title = {{Proving Darwin: making biology mathematical}},
  author={Chaitin, Gregory},
  year={2013},
  publisher={Vintage}
}

@inproceedings{neyshabur2017exploringgeneralizationdeeplearning,
  title = {{Exploring Generalization in Deep Learning}},
  author    = {Neyshabur, Behnam and Bhojanapalli, Srinadh and McAllester, David and Srebro, Nathan},
  booktitle = {{Advances in Neural Information Processing Systems (NeurIPS)}},
  pages     = {5949--5958},
  year      = {2017},
  url       = {https://proceedings.neurips.cc/paper/2017/hash/10ce03a1ed01077e3e289f3e53c72813-Abstract.html}
}

@MISC{pybdmSoftware,
  title     = "sztal/pybdm: v0.1.0",
  author    = "Talaga, Szymon and Tsampourakis, Kostas",
  publisher = "Zenodo",
  year      =  2024
}

@book{cover1999elements,
  title = {{Elements of information theory}},
  author={Cover, Thomas M},
  year={1999},
  publisher={John Wiley \& Sons}
}

@article{chaitin1966length,
  title = {{On the length of programs for computing finite binary sequences}},
  author={Chaitin, Gregory J},
  journal={Journal of the ACM (JACM)},
  volume={13},
  number={4},
  pages={547--569},
  year={1966},
  publisher={ACM New York, NY, USA}
}

@article{delahaye2012numerical,
  title = {{Numerical evaluation of algorithmic complexity for short strings: A glance into the innermost structure of randomness}},
  author={Delahaye, Jean-Paul and Zenil, Hector},
  journal={Applied Mathematics and Computation},
  volume={219},
  number={1},
  pages={63--77},
  year={2012},
  publisher={Elsevier}
}

@article{mcmillan2003two,
  title = {{Two inequalities implied by unique decipherability}},
  author={McMillan, Brockway},
  journal={IRE Transactions on Information Theory},
  volume={2},
  number={4},
  pages={115--116},
  year={2003},
  publisher={IEEE}
}

@phdthesis{kraft1949device,
  title = {{A device for quantizing, grouping, and coding amplitude-modulated pulses}},
  author={Kraft, Leon Gordon},
  year={1949},
  school={Massachusetts Institute of Technology}
}

@article{solomonoff1964formal,
  title = {{A formal theory of inductive inference. Part I}},
  author={Solomonoff, Ray J},
  journal={Information and control},
  volume={7},
  number={1},
  pages={1--22},
  year={1964},
  publisher={Elsevier}
}

@inproceedings{
goldblum2024position,
title = {{Position: The No Free Lunch Theorem, Kolmogorov Complexity, and the Role of Inductive Biases in Machine Learning}},
author={Micah Goldblum and Marc Anton Finzi and Keefer Rowan and Andrew Gordon Wilson},
booktitle = {{Forty-first International Conference on Machine Learning}},
year={2024},
url={https://openreview.net/forum?id=EaJ7nqJ2Fa}
}

@misc{zenil2015twodimensionalkolmogorovcomplexityvalidation,
      title = {{Two-Dimensional Kolmogorov Complexity and Validation of the Coding Theorem Method by Compressibility}}, 
      author={Hector Zenil and Fernando Soler-Toscano and Jean-Paul Delahaye and Nicolas Gauvrit},
      year={2015},
      eprint={1212.6745},
      archivePrefix={arXiv},
      primaryClass={cs.CC},
      url={https://arxiv.org/abs/1212.6745}, 
}

@article{gong2014compressing,
  title = {{Compressing deep convolutional networks using vector quantization}},
  author={Gong, Yunchao and Liu, Liu and Yang, Ming and Bourdev, Lubomir},
  journal={arXiv preprint arXiv:1412.6115},
  year={2014}
}

@misc{rw2019timm,
  author = {Ross Wightman},
  title = {{PyTorch Image Models}},
  year = {2019},
  publisher = {GitHub},
  journal = {GitHub repository},
  doi = {10.5281/zenodo.4414861},
  howpublished = {\url{https://github.com/rwightman/pytorch-image-models}}
}

@misc{Seward1996bzip2,
  author       = {Julian Seward},
  title = {{bzip2 Compression Program}},
  year         = {1996},
  note         = {Version 1.0.8},
  url          = {https://sourceware.org/bzip2/}
}

@misc{gzipAuthors,
  author = {Gailly, Jean-Loup and Adler, Mark},
  title  = {gzip: {GNU} file compression utility},
  note   = {Created as a free replacement for {Unix} compress},
  year   = {1992},
  url    = {https://www.gnu.org/software/gzip/}
}

@article{xu2017information,
  title = {{Information-theoretic analysis of generalization capability of learning algorithms}},
  author={Xu, Aolin and Raginsky, Maxim},
  journal={Advances in neural information processing systems},
  volume={30},
  year={2017}
}

@misc{achille2018emergenceinvariancedisentanglementdeep,
      title = {{Emergence of Invariance and Disentanglement in Deep Representations}}, 
      author={Alessandro Achille and Stefano Soatto},
      year={2018},
      eprint={1706.01350},
      archivePrefix={arXiv},
      primaryClass={cs.LG},
      url={https://arxiv.org/abs/1706.01350}, 
}

@inproceedings{dinh2017sharpminimageneralizedeep,
  title = {{Sharp Minima Can Generalize For Deep Nets}},
  author    = {Dinh, Laurent and Pascanu, Razvan and Bengio, Samy and Bengio, Yoshua},
  booktitle = {{Proceedings of the 34th International Conference on Machine Learning (ICML)}},
  pages     = {1019--1028},
  year      = {2017},
  publisher = {PMLR},
  url       = {https://proceedings.mlr.press/v70/dinh17b.html}
}

@article{xiao2017fashion,
  title = {{Fashion-mnist: a novel image dataset for benchmarking machine learning algorithms}},
  author={Xiao, Han and Rasul, Kashif and Vollgraf, Roland},
  journal={arXiv preprint arXiv:1708.07747},
  year={2017}
}

@misc{krizhevsky2009learning,
  title = {{Learning multiple layers of features from tiny images}},
  author={Krizhevsky, Alex and Hinton, Geoffrey and others},
  year={2009},
  publisher={Toronto, ON, Canada}
}

@inproceedings{deng2009imagenet,
author    = {Jia Deng and Wei Dong and Richard Socher and Li-Jia Li and Kai Li and Li Fei-Fei},
title     = {{ImageNet}: A Large-Scale Hierarchical Image Database},
booktitle = {{Proceedings of the IEEE Conference on Computer Vision and Pattern Recognition (CVPR)}},
year      = {2009}
}

@article{rethink_zhang,
author = {Zhang, Chiyuan and Bengio, Samy and Hardt, Moritz and Recht, Benjamin and Vinyals, Oriol},
year = {2021},
month = {11},
pages = {},
title = {{Understanding deep learning requires rethinking generalization}},
volume = {64},
journal = {Communications of the ACM},
doi = {10.1145/3446776}
}

@article{bartlett2002rademacher,
  title = {{Rademacher and gaussian complexities: Risk bounds and structural results}},
  author={Bartlett, Peter L and Mendelson, Shahar},
  journal={Journal of machine learning research},
  volume={3},
  number={Nov},
  pages={463--482},
  year={2002}
}

@article{vc99,
author = {Vapnik, V. N.},
title = {{An overview of statistical learning theory}},
year = {1999},
issue_date = {September 1999},
publisher = {IEEE Press},
volume = {10},
number = {5},
issn = {1045-9227},
url = {https://doi.org/10.1109/72.788640},
doi = {10.1109/72.788640},
journal = {Trans. Neur. Netw.},
month = sep,
pages = {988–999},
numpages = {12}
}

@article{shwartz2017opening,
  title = {{Opening the black box of deep neural networks via information}},
  author={Shwartz-Ziv, Ravid and Tishby, Naftali},
  journal={arXiv preprint arXiv:1703.00810},
  year={2017}
}

@misc{han2016deepcompressioncompressingdeep,
      title = {{Deep Compression: Compressing Deep Neural Networks with Pruning, Trained Quantization and Huffman Coding}}, 
      author={Song Han and Huizi Mao and William J. Dally},
      year={2016},
      eprint={1510.00149},
      archivePrefix={arXiv},
      primaryClass={cs.CV},
      url={https://arxiv.org/abs/1510.00149}, 
}

@inproceedings{frankle2019lotterytickethypothesisfinding,
  title = {{The Lottery Ticket Hypothesis: Finding Sparse, Trainable Neural Networks}},
  author    = {Frankle, Jonathan and Carbin, Michael},
  booktitle = {{International Conference on Learning Representations (ICLR)}},
  year      = {2019},
  url       = {https://openreview.net/forum?id=rJl-b3RcF7}
}

@inproceedings{kingma2015adam,
  title = {{Adam: A Method for Stochastic Optimization }},
  author={Kingma, Diederik P and Ba, Jimmy},  booktitle = {{International Conference on Learning Representations (ICLR)}},
  year      = {2015}
}

@article{sakabe2025evaluating,
  title={{Binarized Neural Networks Converge Toward Algorithmic Simplicity: Empirical Support for the Learning-as-Compression Hypothesis}},
  author={Sakabe, Eduardo Y and Abrah{\~a}o, Felipe S and Sim{\~o}es, Alexandre and Colombini, Esther and Costa, Paula and Gudwin, Ricardo and Zenil, Hector},
  journal={arXiv preprint arXiv:2505.20646v3},
  year={2025}
}

@article{wilson2025deep,
  title = {{Deep learning is not so mysterious or different}},
  author={Wilson, Andrew Gordon},
  journal={arXiv preprint arXiv:2503.02113},
  year={2025}
}

@article{chaitin1977algorithmic,
  title = {{Algorithmic information theory}},
  author={Chaitin, Gregory J},
  journal={IBM journal of research and development},
  volume={21},
  number={4},
  pages={350--359},
  year={1977},
  publisher={IBM}
}

@inproceedings{sevilla2022compute,
  title = {{Compute trends across three eras of machine learning}},
  author={Sevilla, Jaime and Heim, Lennart and Ho, Anson and Besiroglu, Tamay and Hobbhahn, Marius and Villalobos, Pablo},
  booktitle = {{International Joint Conference on Neural Networks (IJCNN)}},
  year={2022},
  organization={IEEE}
}

@inproceedings{strubell2019energy,
  title={Energy and Policy Considerations for Deep Learning in {NLP}},
  author={Strubell, Emma and Ganesh, Ananya and McCallum, Andrew},
  booktitle = {{Annual Meeting of the Association for Computational Linguistics (ACL)}},
  year={2019}
}

@article{bartoldson2023compute,
  title = {{Compute-Efficient Deep Learning: Algorithmic Trends and Opportunities}},
  author={Bartoldson, Brian R and Kailkhura, Bhavya and Blalock, Davis},
  journal={Journal of Machine Learning Research},
  year={2023}
}

@article{nagel2021white,
  title = {{A white paper on neural network quantization}},
  author={Nagel, Markus and Fournarakis, Marios and Amjad, Rana Ali and Bondarenko, Yelysei and Van Baalen, Mart and Blankevoort, Tijmen},
  journal={arXiv preprint arXiv:2106.08295},
  year={2021}
}

@inproceedings{anthony2020carbontracker,
  title={{Carbontracker: Tracking and Predicting the Carbon Footprint of Training Deep Learning Models}},
  author={Lasse F. Wolff {Anthony} and Benjamin Kanding and Raghavendra {Selvan}},
  booktitle = {{ICML Workshop on Challenges in Deploying and Monitoring Machine Learning Systems}},
  year={2020}}

@misc{gholami2021surveyquantizationmethodsefficient,
      title = {{A Survey of Quantization Methods for Efficient Neural Network Inference}}, 
      author={Amir Gholami and Sehoon Kim and Zhen Dong and Zhewei Yao and Michael W. Mahoney and Kurt Keutzer},
      year={2021},
      eprint={2103.13630},
      archivePrefix={arXiv},
      primaryClass={cs.CV},
      url={https://arxiv.org/abs/2103.13630}, 
}

@misc{martin2025setolsemiempiricaltheorydeep,
      title = {{SETOL: A Semi-Empirical Theory of (Deep) Learning}}, 
      author={Charles H Martin and Christopher Hinrichs},
      year={2025},
      eprint={2507.17912},
      archivePrefix={arXiv},
      primaryClass={cs.LG},
      url={https://arxiv.org/abs/2507.17912}, 
}

@book{zenil2020algorithmic,
  title = {{Algorithmic information dynamics}},
  author={Zenil, Hector and Kiani, Narsis and Abrah{\~a}o, Felipe and Tegn{\'e}r, Jesper},
  year={2020},
  publisher={Scholarpedia}
}

@article{soler2014calculating,
  title = {{Calculating Kolmogorov complexity from the output frequency distributions of small Turing machines}},
  author={Soler-Toscano, Fernando and Zenil, Hector and Delahaye, Jean-Paul and Gauvrit, Nicolas},
  journal={PloS one},
  volume={9},
  number={5},
  pages={e96223},
  year={2014},
  publisher={Public Library of Science San Francisco, USA}
}

@article{zenil2018decomposition,
  title = {{A decomposition method for global evaluation of Shannon entropy and local estimations of algorithmic complexity}},
  author={Zenil, Hector and Hern{\'a}ndez-Orozco, Santiago and Kiani, Narsis A and Soler-Toscano, Fernando and Rueda-Toicen, Antonio and Tegn{\'e}r, Jesper},
  journal={Entropy},
  volume={20},
  number={8},
  pages={605},
  year={2018},
  publisher={MDPI}
}

@misc{solomonoff1960preliminary,
  title = {{A preliminary report on a general theory of inductive inference}},
  author={Solomonoff, Ray J},
  year={1960},
  organization={Zator Company Cambridge, MA}
}

@book{davis1982computability,
  title = {{Computability \& unsolvability}},
  author={Davis, Martin},
  year={1982},
  publisher={Courier Corporation}
}

@article{kolmogorov1965three,
  title = {{Three approaches to the quantitative definition ofinformation}},
  author={Kolmogorov, Andrei N},
  journal={Problems of information transmission},
  volume={1},
  number={1},
  pages={1--7},
  year={1965}
}

@article{levin1974laws,
  title = {{Laws of information conservation (nongrowth) and aspects of the foundation of probability theory}},
  author={Levin, Leonid Anatolevich},
  journal={Problemy Peredachi Informatsii},
  volume={10},
  number={3},
  pages={30--35},
  year={1974},
  publisher={Russian Academy of Sciences, Branch of Informatics, Computer Equipment and~…}
}

@inproceedings{
ullrich2017soft,
title = {{Soft Weight-Sharing for Neural Network Compression}},
author={Karen Ullrich and Edward Meeds and Max Welling},
booktitle = {{International Conference on Learning Representations}},
year={2017},
url={https://openreview.net/forum?id=HJGwcKclx}
}

@inproceedings{hinton2014distilling,
  title = {{Distilling the Knowledge in a Neural Network}},
  author={Geoffrey Hinton and Oriol Vinyals and Jeff Dean},
  booktitle = {{Deep Learning and Representation Learning Workshop in Conjunction with NeurIPS}},
  year={2014}
}

@inproceedings{tinyvit,
  title = {{TinyViT: Fast Pretraining Distillation for Small Vision Transformers}},
  author    = {Wu, Kan and Zhang, Jinnian and Peng, Houwen and Liu, Mengchen and Xiao, Bin and Fu, Jianlong and Yuan, Lu},
  booktitle = {{European Conference on Computer Vision (ECCV)}},
  year      = {2022}
}

@inproceedings{valledeep,
  title = {{Deep learning generalizes because the parameter-function map is biased towards simple functions}},
  author={Valle-Perez, Guillermo and Camargo, Chico Q and Louis, Ard A},
  year = 2019,
  booktitle = {{International Conference on Learning Representations}}
}

@article{Liu2024DeltaGzip,
author = {Liu, Tao and Simine, Lena},
title = {{DeltaGzip: Computing Biopolymer–Ligand Binding Affinity via Kolmogorov Complexity and Lossless Compression}},
journal = {Journal of Chemical Information and Modeling},
volume = {64},
number = {14},
pages = {5617-5623},
year = {2024},
doi = {10.1021/acs.jcim.4c00461},
    note ={PMID: 38980667},

URL = { 
    
        https://doi.org/10.1021/acs.jcim.4c00461
    
    

},
eprint = { 
    
        https://doi.org/10.1021/acs.jcim.4c00461
    
    

}
}

@article{Machado2021,
  author = {Machado, J. A. Tenreiro and Rocha-Neves, J. M. and Azevedo, Filipe and Andrade, J. P.},
  title = {{Advances in the computational analysis of SARS-COV2 genome}},
  journal = {Nonlinear Dynamics},
  year = {2021},
  volume = {106},
  number = {2},
  pages = {1525--1555},
  month = {Oct},
  doi = {10.1007/s11071-021-06836-y},
  url = {https://doi.org/10.1007/s11071-021-06836-y},
  issn = {1573-269X}
}

@article{Alzahrani2024,
    doi = {10.1371/journal.pone.0301806},
    author = {Alzahrani, Alaa},
    journal = {PLOS ONE},
    publisher = {Public Library of Science},
    title = {{Utility of Kolmogorov complexity measures: Analysis of L2 groups and L1 backgrounds}},
    year = {2024},
    month = {04},
    volume = {19},
    url = {https://doi.org/10.1371/journal.pone.0301806},
    pages = {1-25},
    number = {4},

}

@inproceedings{Rigau2007,
author = {Rigau, Jaume and Feixas, Miquel and Sbert, Mateu},
year = {2007},
month = {01},
pages = {105-112},
title = {{Conceptualizing Birkhoff's Aesthetic Measure Using Shannon Entropy and Kolmogorov Complexity.}},
doi = {10.2312/COMPAESTH/COMPAESTH07/105-112},
booktitle = {{Third Eurographics conference on Computational Aesthetics in Graphics, Visualization and Imaging}}
}

@article{Peptenatu2022,
  title = {{Kolmogorov compression complexity may differentiate different schools of Orthodox iconography}},
  author={Peptenatu, Daniel and Andronache, Ion and Ahammer, Helmut and Taylor, Richard and Liritzis, Ioannis and Radulovic, Marko and Ciobanu, Bogdan and Burcea, Marin and Perc, Matjaz and Pham, Tuan D and others},
  journal={Scientific reports},
  volume={12},
  number={1},
  pages={10743},
  year={2022},
  publisher={Nature Publishing Group UK London}
}

@article{nhan2025improvement,
  title = {{Improvement of spiking neural network with bit planes and color models}},
  author={Nhan, Luu Trong and Duong, Luu Trung and Nam, Pham Ngoc and Thang, Truong Cong},
  journal={IEEE Access},
  volume={13},
  pages={198607--198622},
  year={2025},
  publisher={IEEE}
}

@article{tishby2000information,
  title = {{The information bottleneck method}},
  author={Tishby, Naftali and Pereira, Fernando C and Bialek, William},
  journal={arXiv preprint physics/0004057},
  year={2000}
}

@inproceedings{power2022grokking,
  title = {{Grokking: Generalization beyond overfitting on small algorithmic datasets}},
  author={Power, Alethea and Burda, Yuri and Edwards, Harri and Babuschkin, Igor and Misra, Vedant},
  booktitle={{1st Mathematical Reasoning in General Artificial Intelligence Workshop, ICLR}},
  year={2021}
}

\clearpage
\appendix

\section{Additional Details on Algorithmic Information Theory}\label{app:background}
For completeness, the definitions introduced in the main text are restated here, with additional detail provided where necessary. Kolmogorov–Chaitin–Solomonoff (KCS) complexity
\citep{kolmogorov1965three,chaitin1966length,solomonoff1960preliminary}
formalizes the principle that structured objects can be described more concisely than random objects.
For a finite object $\vw\in{0,1}^*$ and a program $\mathrm{p}$, this relationship is expressed as: \begin{equation}\label{eq:app_kc}
\Ccal(\vw)
:=
\min \{|\mathrm p|:U(\mathrm p)=\vw \},
\end{equation}
Here, $\mathrm p$ is executed on a fixed universal prefix Turing machine
$U$. In general, Eq.~\eqref{eq:app_kc} is not computable because determining whether all shorter programs produce $\vw$ and halt, fail, or run indefinitely is undecidable. This is known as the \emph{halting problem}, as described by~\cite{davis1982computability}.

A related measure to the KCS complexity of an object is its algorithmic probability~\citep{solomonoff1964formal},
which assigns a probability to each possible output produced by a program:
\begin{equation}\label{eq:app_ap}
\Prob_U(\vw)
:=
\sum_{\mathrm p:U(\mathrm p)=\vw}2^{-|\mathrm p|}
\le 1 .
\end{equation}
Here $\Prob_U(\vw)$ is the probability that a random program produces $\vw$ and halts. The last inequality follows from the Kraft–McMillan inequality
\citep{kraft1949device,mcmillan2003two}, since the halting programs form a
prefix code, i.e., no program is the start of another program.

To help gain intuition into approaches for estimating KCS using algorithmic probability, consider an object $\vw$ produced by a program $\mathrm{p}$, and suppose there are $m$ distinct programs of length $|\mathrm{p}|$ such that $U(\mathrm{p})=\vw$. Then $\Prob_U (\vw) \ge m \cdot 2^{-|\mathrm{p}|}$ by Eq.~\ref{eq:app_ap} and $\Ccal(\vw) \le |\mathrm{p}|$, expressing the upper bound on its KC. For instance, if $|\mathrm{p}| = 4$, each such program contributes $2^{-4} = 1/16$, so $\Prob_U (\vw) \ge m/16$. Thus, the existence of one short program gives a lower bound on the algorithmic probability of $\vw$. More generally, a larger output probability suggests the existence of shorter (program) descriptions.

An important caveat is that prefix programs are self-delimiting, meaning their description also contains information about where they end. Therefore, estimating complexity from a decomposed view of an object is not the same as estimating the entire object directly. In particular, suppose the partition $\Pi(\vw)=(\w_1,\ldots,\w_m)$ divides the object $\vw$ into $m$ ordered blocks $\w_i$. Then,
\begin{equation}
\Ccal(\w_1,\ldots,\w_m) = \sum_{i=1}^m \Ccal(\w_i) - I_\Pi(\w_1,\ldots,\w_m) +
O(\log m),
\end{equation}
where $\Ccal(\w)$ denotes the prefix KCS complexity of $\w$ and $I_\Pi$ denotes the algorithmic information shared across blocks in the partition $\Pi$. An estimate that sums local complexities, therefore, ignores part of this shared structure, and the $O(\log_2 m)$ term accounts for the overhead needed to delimit and reconstruct the ordered decomposition of $\vw$. This induces a representation and partition error, which is why both matter. Some useful identities about these errors are summarized in Appx.~\ref{app:kc_facts}.

The coding theorem~\citep{levin1974laws} connects the output probability of an object to its complexity by
\begin{equation}
\Ccal(\vw) = -\log_2 \Prob_U (\vw) + O(1) ,
\end{equation}
where $O(1)$ is a constant factor depending only on $U$, e.g., changing to a machine that uses different instructions adds a constant cost irrespective of $\vw$. The central idea behind the Coding Theorem, that objects that appear often as outputs are of low complexity, motivated the Coding Theorem Method (CTM) for numerically estimating KCS complexity via large-scale simulations of small Turing machines~\citep{delahaye2012numerical, soler2014calculating, zenil2015twodimensionalkolmogorovcomplexityvalidation}.

\paragraph{Simulating Turing Machines.}
Fix integers $n,k,T\in\N$ and let $\cM_{n,k}^T$ denote the set of all
$n$-state, $k$-symbol Turing machines under a fixed encoding, each run for at
most $T$ steps. Since $n,k,T$ are fixed, $\cM_{n,k}^T$ is finite, and all
machines in the class can, in principle, be simulated. Let
$\rho_T(A)=\vw\in\cW_T$ denote the output of $A$ whenever $A$ halts within
$T$ steps. CTM estimates the algorithmic probability of $\vw$ by its empirical
frequency
\begin{equation}\label{eq:app_ctm_dist}
\mathrm{Dist}_{n,k}^T(\vw)
=
\frac{|\{A\in\cM{n,k}^T:\rho_T(A)=\vw\}|}
{|\{A\in\cM_{n,k}^T:A\text{ halts within }T\text{ steps}\}|}.
\end{equation}
and the corresponding complexity estimate is  $\Ccal_{\CTM}(\vw) = -\log_2 \mathrm{Dist}_{n,k}^T(\vw)+O(1)$.
For larger objects, however, exhaustive simulation is not tractable, since the number of machines grows exponentially with $n$ and $k$.

\paragraph{Block Decomposition Method.}
To extend the CTM to larger objects, \cite{zenil2018decomposition} introduced the Block Decomposition Method (BDM), which partitions the object into contiguous blocks and sums the complexity estimates of its smaller structures. Let $\mW \in \Sigma^{H \times W}$ be a large object over a finite alphabet $\Sigma$, and let $\Pi$ be a lossless partition operator such that
\begin{equation}\label{eq:partition}
\Pi(\mW) = (\w_1, \dots, \w_m), \qquad \mW = \w_1 \otimes \cdots \otimes \w_m,
\end{equation}
where each $\w_i \in \Sigma^{h \times w}$ is a contiguous block of size $h \times w$, and $\otimes$ denotes the reconstruction of $\mW$ by concatenating the ordered blocks. If a block appears several times, its complexity is accounted for only once, with a logarithmic term for its frequency; the idea is that repeated structures need not be described from scratch. Let $\Pi_u(\mW) \subseteq \Pi(\mW)$ denote the set of distinct blocks in the partition, and let $n_{\w}$ denote the number of occurrences of $\w \in \mW$. The KCS complexity is then computed as
\begin{equation}
\Ccal_\mathrm{BDM}(\mW) =
\sum_{\w \in \Pi_u(\mW)}
\Ccal_{\mathrm{CTM}}(\w) + \log_2 n_{\w},
\end{equation}
where the logarithmic term accounts for the complexity of repeating a block. Hence, repeated structures have diminishing complexity beyond their first occurrences. For example, consider the objects $\mW_1 = aa \otimes aa \otimes ab \otimes aa$, and $\mW_2 = aa \otimes ab \otimes ba \otimes bb$. Assuming $\Ccal_{\mathrm{CTM}}(aa)= \Ccal_{\mathrm{CTM}}(ab)= \Ccal_{\mathrm{CTM}}(ba)= \Ccal_{\mathrm{CTM}}(bb)= 1$, then $\mathrm{BDM}(\mW_1) = 1 + \log_2 3 + 1 \approx 3.6$ and $\mathrm{BDM} (\mW_2) = 4$.

\paragraph{Limitations.}
In practice, the accuracy of $\Ccal_\BDM$ is limited by the finite CTM tables it relies on, where values are computed from a restricted machine class $\cM_{n,k}^T$ with fixed $n$, $k$, and $T$. As a result, differences in
estimates may reflect either structural differences or limitations of the
underlying CTM approximation. In particular, partition choices are not
invariant in practice. Although they represent the same object up to an additive constant, different block shapes, e.g., $(2\times1)$ versus $(1\times2)$, produce different local blocks and multiplicities, and therefore different $\Ccal_\BDM$ estimates. In the unbounded setting, such differences are bounded up to an additive constant for fixed lossless partition functions~\citep{kolmogorov1965three}.

The most extensive practical CTM table is the $5$-state, $2$-symbol
enumeration of~\citet{soler2014calculating}, which considered over
$2.6\times10^{13}$ machines and required $450$ CPU days. As the basis for the
$\BDM$ approximation, \citet{zenil2018decomposition} showed that such finite
CTM values still provide meaningful local rankings of short patterns and
capture simple algorithmic structure\footnote{We refer to
\citet{zenil2018decomposition} for further practical considerations, including
overlapping versus non-overlapping partitions and their effect on the
estimates.}. However, extending CTM tables is computationally infeasible,
especially in high-dimensional settings such as neural networks with large
parameter spaces and high-precision values. In practice, objects are therefore
first transformed to finite representations, and then decomposed into structures that are closer to what the available CTM tables can support.

\paragraph{Compression Proxies.}\label{app:background_compression}
A common approach is to use compression algorithms as proxies for KCS
complexity. The better an object compresses, the less complex we expect it to be
be. Such approximations have been used to reason about structure and how
generative patterns emerge. For instance, Gzip~\citep{gzipAuthors} has been
used to approximate Gibbs entropy in molecular systems~\citep{Liu2024DeltaGzip}
and as a language-agnostic metric for distinguishing native from second
language writing~\citep{Alzahrani2024}. Bzip2~\citep{Seward1996bzip2} has been
used as a proxy for KCS complexity to estimate evolutionary distance between
SARS-CoV-2 sequences~\citep{Machado2021}. In image settings, JPG and PNG have
been used as practical proxies for categorizing paintings by style and
artist~\citep{Rigau2007}, and for distinguishing schools of orthodox
iconography~\citep{Peptenatu2022}. These methods are useful diagnostics, but
they are not CTM or BDM estimators, since they rely on compressor-specific
coding schemes rather than algorithmic probability tables.

\clearpage
\section{QuBD Method Details}

\subsection{Quantization}\label{app:quantizer}
Neural network weights are real-valued. For $b$-bit scalar quantization, we map
$\mathbf{w}\in\mathbb{R}^n$ to integer codes $\mathbf{q}\in\{0,\dots,q_{\max}\}^n$,
where $q_{\max}=2^b-1$, by applying a quantizer elementwise. Let
\[
w_{\min}:=\min_i w_i,\qquad
w_{\max}:=\max_i w_i,\qquad
\Delta_w:=w_{\max}-w_{\min},\qquad
s:=\frac{\Delta_w}{q_{\max}}.
\]
Define the (min--max) uniform affine quantizer  $\mathbf{q}=Q_b(\mathbf{w}),$ where $Q_b:\mathbb{R}^n\to\{0,\dots,q_{\max}\}^n$ by elementwise

\begin{equation}
    q_i=
    \begin{cases}
    \mathrm{clip}\!\left(\left\lfloor\dfrac{w_i-w_{\min}}{s}\right\rceil,\;0,\;q_{\max}\right), & \Delta_w\ge\varepsilon,\\[6pt]
    0, & \Delta_w<\varepsilon,
    \end{cases}
    \label{eq:uniform_quantizer}
\end{equation}

where $\lfloor\cdot\rceil$ rounds to nearest integer, $\mathrm{clip}(\cdot,0,q_{\max})$ clamps to the valid range,
and $\varepsilon>0$ is a small constant for numerical stability.
The corresponding dequantizer $D_b:\{0,\dots,q_{\max}\}^n\to\mathbb{R}^n$ is
\begin{equation}
\widehat{\mathbf{w}}=D_b(\mathbf{q})=s\,\mathbf{q}+w_{\min}\mathbf{1}.
    \label{eq:dequantize_uniform}    
\end{equation}

\subsection{Example of Bit-Plane Decomposition}\label{appendix:bitplane_example}
Consider an object $\mW = [w_1,w_2]$ with $w_1=4$ and $w_2=13$. And for intuition, let the target precision be $q^\star=4$. Then the
lossless quantized target is
\begin{equation}
    x^\star = Q_{q^\star}(\mW) = (4,13) = (0100_2,1101_2).
\end{equation}
The corresponding bit-planes are
\begin{equation}
    \boldsymbol{\beta}_3^\star = (0,1), \qquad \boldsymbol{\beta}_2^\star = (1,1), \qquad \boldsymbol{\beta}_1^\star = (0,0), \qquad \boldsymbol{\beta}_0^\star = (0,1).
\end{equation}
If we keep only $q=2$ most significant bit-planes, then
\begin{equation}
    y_2 = \floor{\frac{x^\star}{2^{4-2}}} = \floor{\frac{(4,13)}{4}} = (1,3) = (01_2,11_2).
\end{equation}
So only the first two bit-planes, $\boldsymbol{\beta}_3^\star$ and $\boldsymbol{\beta}_2^\star$, are
retained. The omitted lower-bit residual is
\begin{equation}
    \delta_2 = x^\star - 2^{4-2} y_2 = (4,13) - 4(1,3) = (0,1) = (00_2,01_2).
\end{equation}
Then we have
\begin{equation}
    x^\star = (4,13) = (0100_2,1101_2) = 2^{4-2}(01_2,11_2) + (00_2,01_2).
\end{equation}
In this example, $w_1=4$ is already determined by its two retained MSB
bit-planes, while $w_2=13$ still requires the lower residual bit-planes to
recover the symbol. Thus estimating $\Ccal(y_2)$ ignores the
information stored in $\delta_2$.

\subsection{Weight Tensor Preprocessing}\label{app:weight_preprocess}
Prior to computing the QuBD complexity of network weights, each weight tensor in the network must be represented as a 2D binary matrix suitable for BDM estimation. This requires two steps: reshaping the weight tensor into a 2D matrix, and extracting its bit-planes as described in Sec. \ref{sec:method}. The reshaping strategy is important because BDM is sensitive to the spatial adjacency of elements, and different reshaping choices will expose different local structures to the CTM table.

Fully connected layers have weight tensors of shape $[C_\text{out}, C_\text{in}]$ and are already 2D, requiring no reshaping. Tensors with more than two dimensions (e.g., 4D convolutional kernels of shape $[C_\text{out}, C_\text{in}, H, W]$) are flattened along all but the first dimension, yielding a matrix of shape $[C_\text{out}, C_\text{in} \cdot H \cdot W]$. This preserves the output channel structure, such that each row corresponds to one filter. Weight tensors with fewer than 4 rows or columns are excluded, as they cannot be 
partitioned into blocks of the minimum size supported by the precomputed CTM tables 
used in this work ($\pi = 4 \times 4$; \citealt{pybdmSoftware}). 1D weight tensors, such as those in batch normalization layers, are excluded entirely.
\looseness=-1

\section{Quantized Block Decomposition (QuBD) Complexity}
\subsection{Information Identities for KCS Complexity}\label{app:kc_facts}
We list here some useful KCS complexity identities and briefly interpret them to build further intuition. We refer to~\cite{li2008introduction} for additional details.

\textbf{Algorithmic Mutual Information.} For finite binary strings $x,y,z \in \{0,1\}^*$, the algorithmic information in
$x$ about $y$ (and not) conditioned on $z$ is 
\begin{equation}\label{eq:algo_mutual_info}
    I(x:y) := \Ccal(y) - \Ccal(y \mid x), \quad I(x:y \mid z) := \Ccal(y \mid z) - \Ccal(y \mid x, z) .
\end{equation}

\textbf{Chain Rule for KCS Complexity.} For finite binary strings $x,y \in \{0,1\}^*$,
\begin{equation}\label{eq:chain_rule}
\begin{aligned}
    \Ccal(x,y)
    &= \Ccal(x) + \Ccal\bigl(y \mid x, \Ccal(x)\bigr) + O(1) \\
    &\equiv\;
    \Ccal(x) + \Ccal\bigl(y \mid x, \Ccal(x)\bigr)
    = \Ccal(y) + \Ccal\bigl(x \mid y, \Ccal(y)\bigr) + O(1).
\end{aligned}
\end{equation}

\textbf{Symmetry of Information.} For finite binary strings $x,y \in \{0,1\}^*$,
\begin{equation}\label{eq:symmetry}
\begin{aligned}
    I\bigl((x,\Ccal(x)) : y\bigr)
    &= I\bigl((y,\Ccal(y)) : x\bigr) + O(1) \\
    &\equiv\;
    \Ccal(y) - \Ccal\bigl(y \mid x, \Ccal(x)\bigr)
    = \Ccal(x) - \Ccal\bigl(x \mid y, \Ccal(y)\bigr) + O(1).
\end{aligned}
\end{equation}

Eq.~\eqref{eq:algo_mutual_info} isolates the part of $y$ that can be reused from $x$, and the residual $\Ccal(y \mid x)$, which remains irreducible. But note that $I(x:y) \neq I(y:x)$ in general, because the description of one object inside another is not self-delimiting, i.e., from $x$ alone we cannot know where the description of $x$ ends inside a description of the pair $(x,y)$. This is apparent in Eq.~\eqref{eq:chain_rule} where the additional conditioning on $\Ccal(x)$ in $\Ccal(y \mid x, \Ccal(x))$ provides the information to distinguish the objects. Conveniently, Eq.~\eqref{eq:symmetry} shows exactly when mutual information between objects is in fact symmetric.

However, it can be clearer to reason about the symmetry of information assuming \emph{composite} objects, e.g., $(x,y)$ are self-delimiting. We therefore omit additional conditionals in our analysis, which instead incurs an error in information of $O(\log q^\star d)$ for objects with bit length $O(q^\star d)$.

\subsection{Proof of Theorem~\ref{thm:res_decomp} (Residual Decomposition)}
\label{proof:res_decomp} 
By construction $x^\star = 2^{q^\star-q} y_q + \delta_q$, so the target object $x^\star$ and the composite object $(y_q,\delta_q)$ describe each other up to a fixed additive constant:
\begin{equation}
    \Ccal(x^\star) = \Ccal(y_q,\delta_q) + O(1).
\end{equation}
Applying the chain rule from Eq.~\eqref{eq:chain_rule} to $(y_q,\delta_q)$ gives
\begin{equation}
    \Ccal(y_q,\delta_q) = \Ccal(y_q) + \Ccal(\delta_q \mid y_q, \Ccal(y_q)) + O(1).
\end{equation}
Since the objects here have bit length $O(q^\star d)$, the components are self-delimiting up to a logarithmic term, and therefore
\begin{equation}
\begin{aligned}
     \Ccal(y_q,\delta_q) &= \Ccal(y_q) + \Ccal(\delta_q \mid y_q) + O(\log(q^\star d)) \\
    \implies \quad
    \Ccal(x^\star) &= \Ccal(y_q) + \Ccal(\delta_q \mid y_q) + O(\log(q^\star d)) . 
\end{aligned}
\end{equation}
Letting $R_q = \Ccal(\delta_q \mid y_q)$ gives Eq.~\eqref{thm:res_decomp_eq1}. That is, if $x^\star$ is estimated only from the retained prefix $y_q$, then the estimate misses the residual information contained in the discarded lower bits, up to the logarithmic term.

It remains to bound the size of $R_q$. By the construction, each residual element $\delta_{q,i}$ can take one of $2^{q^\star-q}$ possible values, namely from $\{0,\ldots,2^{q^\star-q}-1\}$. Therefore, it requires at most $q^\star-q$ bits to (encode) represent. Over all $d$ elements, the bit length of $\delta_q$ is at most $(q^\star-q)d$. A conditional description of $\delta_q$ given $y_q$ therefore, cannot be longer than this bit length up to an additive constant, so we get
\begin{equation}
\begin{aligned}
    0 \le \Ccal(\delta_q \mid y_q) \le (q^\star-q)d + O(1) \\
    \implies \quad
    0 \le R_q \le (q^\star-q)d + O(1) .
\end{aligned}    
\end{equation}
Thus, the residual estimation gap is bounded above in the number of discarded bits, $O((q^\star-q)d)$. Finally, if $q=q^\star$, then no lower bit-planes are omitted, so $\delta_q=0$ and $R_q = \Ccal(0 \mid y_q) = O(1)$. \hfill \qedsymbol

\subsection{Proof of Theorem~\ref{thm:add_bitplane} (Bit-Plane Residual Loss)}
\label{proof:add_bitplane}
By definition of the MSB-prefixes, for each $i \in \{1,\ldots,d\}$,
\begin{equation}
    y_{q+1,i}
    = \floor{\frac{x_i^\star}{2^{q^\star-q-1}}}
    = 2\floor{\frac{x_i^\star}{2^{q^\star-q}}} + \beta^\star_{i,q^\star-q-1}
    = 2y_{q,i} + \beta^\star_{i,q^\star-q-1} ,
\end{equation}
which can be more clearly expressed by
\begin{equation}
    y_{q+1} = 2y_q + b_{q+1} .
\end{equation}

Now we write the residual decomposition at precisions $q$ and $q+1$:
\begin{equation}
    x^\star = 2^{q^\star-q} y_q + \delta_q = 2^{q^\star-q-1} y_{q+1} + \delta_{q+1} .
\end{equation}
Substituting $y_{q+1} = 2y_q + b_{q+1}$ yields
\begin{equation}
\begin{aligned}
    2^{q^\star-q} y_q + \delta_q
    &= 2^{q^\star-q-1}(2y_q + b_{q+1}) + \delta_{q+1} \\
    \implies \quad
    \delta_q &= 2^{q^\star-q-1} b_{q+1} + \delta_{q+1} .
\end{aligned}
\end{equation}
So, once $y_q$ is fixed, the residual $\delta_q$ and the pair $(b_{q+1},\delta_{q+1})$ describe each other up to an additive constant. Hence,
\begin{equation}
    \Ccal(\delta_q \mid y_q) = \Ccal(b_{q+1},\delta_{q+1} \mid y_q) + O(1) .
\end{equation}

Applying the chain rule and absorbing the self-delimiting overhead into $O(\log(q^\star d))$, gives
\begin{equation}
\begin{aligned}
    \Ccal(\delta_q \mid y_q)
    &= \Ccal(b_{q+1} \mid y_q) + \Ccal(\delta_{q+1} \mid y_q,b_{q+1}) + O(\log(q^\star d)) \\
    &= \Ccal(b_{q+1} \mid y_q) + \Ccal(\delta_{q+1} \mid y_{q+1}) + O(\log(q^\star d)) .
\end{aligned}
\end{equation}
The last equality holds because $y_{q+1}$ and $(y_q,b_{q+1})$ describe each other by a fixed rule, namely
\begin{equation}
    y_{q+1}=2y_q+b_{q+1}, \qquad y_q=\floor{\frac{y_{q+1}}{2}}, \qquad b_{q+1}=y_{q+1} \bmod 2 .
\end{equation}
So replacing one condition with the other changes the conditional complexity by at most $O(1)$.

Using $R_q = \Ccal(\delta_q \mid y_q)$ and $R_{q+1} = \Ccal(\delta_{q+1} \mid y_{q+1})$,
\begin{equation}
\begin{aligned}
    R_q &= \Ccal(b_{q+1} \mid y_q) + R_{q+1} + O(\log(q^\star d)) \\
    \implies \quad
    R_q - R_{q+1} &= \Ccal(b_{q+1} \mid y_q) + O(\log(q^\star d)) .
\end{aligned}
\end{equation}
Since $\Ccal(b_{q+1} \mid y_q) \ge 0$, we get
\begin{equation}
    R_q \ge R_{q+1} - O(\log(q^\star d)) .
\end{equation}
Thus, the residual loss is non-increasing up to the same logarithmic term. If the new bit-plane is not already described by $y_q$ and contributes information beyond that term, then $R_q > R_{q+1}$. \hfill \qedsymbol

\subsubsection{Cumulative Bit-Plane Refinement}
\begin{corollary}[Cumulative Bit-Plane Refinement]
\label{cor:cumulative_refinement}
For $1 \le q < r \le q^\star$, let $R_s := \Ccal(\delta_s \mid y_s)$ and $s \in \{q,r\}$. Then we obtain
\begin{equation}
\begin{aligned}
    R_q &= \Ccal(y_r \mid y_q) + R_r + O(\log(q^\star d)) \\
    \implies \quad
    \Ccal(y_r) - \Ccal(y_q) &= R_q - R_r + O(\log(q^\star d)).
\end{aligned}
\end{equation}
In particular, setting $r=q^\star$ gives
\begin{equation}
    R_q = \Ccal(x^\star \mid y_q) + O(\log(q^\star d)),
    \qquad
    R_{q^\star}=O(1),
\end{equation}
since $y_{q^\star}=x^\star$ and $\delta_{q^\star}=0$. So adding bit-planes from precision $q$ to precision $r$ removes the information needed to refine $y_q$ into $y_r$, up to the same logarithmic term. At full precision, no residual information remains beyond the additive constant.
\end{corollary}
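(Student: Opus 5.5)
The plan is to mirror the proof of Thm.~\ref{thm:add_bitplane}, but to peel off all the planes between precisions $q$ and $r$ in a single step rather than one at a time. That avoids accumulating $r-q$ separate logarithmic error terms. First I establish the exact arithmetic identities
\begin{equation*}
y_r = 2^{r-q} y_q + \floor{\frac{\delta_q}{2^{q^\star-r}}}, \qquad \delta_q = 2^{q^\star-r}\bigl(y_r - 2^{r-q}y_q\bigr) + \delta_r .
\end{equation*}
Both follow from $x^\star = 2^{q^\star-q}y_q+\delta_q = 2^{q^\star-r}y_r+\delta_r$ together with $0\le\delta_r<2^{q^\star-r}$. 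From these I get two equivalences, each up to an additive $O(1)$. Given $y_q$, the residual $\delta_q$ and the pair $(y_r,\delta_r)$ are computable from each other by a fixed rule. Likewise, $y_r$ and the pair $(y_q, y_r \bmod 2^{r-q})$ are computable from each other. In particular, $y_q = \floor{y_r/2^{r-q}}$ is a computable function of $y_r$.

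Next I apply the chain rule, Eq.~\eqref{eq:chain_rule}, conditionally on $y_q$. I absorb the self-delimiting term $\Ccal(\cdot)$ into $O(\log(q^\star d))$, exactly as in the proofs of Thm.~\ref{thm:res_decomp} and Thm.~\ref{thm:add_bitplane}. This gives
\begin{equation*}
R_q=\Ccal(\delta_q\mid y_q)=\Ccal(y_r,\delta_r\mid y_q)+O(1)=\Ccal(y_r\mid y_q)+\Ccal(\delta_r\mid y_r,y_q)+O(\log(q^\star d)).
\end{equation*}
Since $y_q$ is computable from $y_r$, the extra condition can be dropped: $\Ccal(\delta_r\mid y_r,y_q)=\Ccal(\delta_r\mid y_r)+O(1)=R_r+O(1)$. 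This yields the first displayed identity.

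For the second identity, I use that $y_q$ is a computable function of $y_r$, so $\Ccal(y_r)=\Ccal(y_r,y_q)+O(1)$. Applying the chain rule to the pair $(y_q,y_r)$ then gives $\Ccal(y_r)=\Ccal(y_q)+\Ccal(y_r\mid y_q)+O(\log(q^\star d))$. Substituting $\Ccal(y_r\mid y_q)=R_q-R_r+O(\log(q^\star d))$ from the first step gives $\Ccal(y_r)-\Ccal(y_q)=R_q-R_r+O(\log(q^\star d))$. As an alternative route, I can subtract two instances of Thm.~\ref{thm:res_decomp}, taken at precisions $q$ and $r$, which gives the same identity directly and serves as a consistency check.

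For the special case $r=q^\star$, I note that $y_{q^\star}=\floor{x^\star/2^0}=x^\star$ and $\delta_{q^\star}=0$. Thm.~\ref{thm:res_decomp} already gives $R_{q^\star}=\Ccal(0\mid x^\star)=O(1)$, and plugging this into the first identity gives $R_q=\Ccal(x^\star\mid y_q)+O(\log(q^\star d))$.

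The main obstacle is keeping the error terms honest. The symmetry-of-information step requires conditioning on $\Ccal(y_q)$, and the replacement of that condition costs $O(\log \Ccal(y_q))$. I need this to be $O(\log(q^\star d))$, which holds because every object involved has bit length at most $q^\star d$ and hence complexity $O(q^\star d)$. I will make this explicit rather than implicit, so that the error stays a single $O(\log(q^\star d))$ term. It must not grow to $O((r-q)\log(q^\star d))$, which is what iterating Thm.~\ref{thm:add_bitplane} naively would produce.
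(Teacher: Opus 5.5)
Your proof is correct, but it reverses the paper's order and obtains the first identity by a different mechanism.

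The paper never splits the residual. It subtracts Thm.~\ref{thm:res_decomp} at precisions $q$ and $r$ to get $\Ccal(y_r)-\Ccal(y_q)=R_q-R_r+O(\log(q^\star d))$ immediately. It then uses $\Ccal(y_q\mid y_r)=O(1)$ and the chain rule on $(y_q,y_r)$ to get $\Ccal(y_r)-\Ccal(y_q)=\Ccal(y_r\mid y_q)+O(\log(q^\star d))$, and reads off the first identity by comparing the two.

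You instead run the proof of Thm.~\ref{thm:add_bitplane} with the whole block $y_r \bmod 2^{r-q}$ in place of the single plane $b_{q+1}$. The exchange $\delta_q\leftrightarrow(y_r,\delta_r)$ given $y_q$, a chain rule relative to $y_q$, and dropping the redundant condition give the first identity directly. The second identity then follows from the same chain rule on $(y_q,y_r)$, with the paper's subtraction serving only as your consistency check.

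The two routes buy different things. The paper's is shorter, uses Thm.~\ref{thm:res_decomp} as a black box, and needs only the unconditional symmetry of information of Eq.~\eqref{eq:chain_rule}. Yours never passes through $\Ccal(x^\star)$, and it shows directly how the information missing given $y_q$ splits into the refinement $\Ccal(y_r\mid y_q)$ plus $R_r$. It also reduces exactly to the proof of Thm.~\ref{thm:add_bitplane} at $r=q+1$. The cost is that you need the relativized chain rule, which the paper uses only tacitly in that proof. Both routes keep a single $O(\log(q^\star d))$ error, unlike the $O((r-q)\log(q^\star d))$ telescoping sum the paper appends.

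One small precision: in your step relative to $y_q$, the extra condition whose removal costs a logarithm is $\Ccal(y_r\mid y_q)$ rather than $\Ccal(y_q)$. It is equally $O(q^\star d)$, so nothing changes.
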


\begin{proof}\label{proof:cumulative_refinement}
    From Thm~\ref{thm:res_decomp}, applied at precisions $q$ and $r$,
\begin{equation}
    \Ccal(x^\star) = \Ccal(y_q) + R_q + O(\log(q^\star d)),
    \qquad
    \Ccal(x^\star) = \Ccal(y_r) + R_r + O(\log(q^\star d)) .
\end{equation}
Subtracting gives
\begin{equation}
    \Ccal(y_r) - \Ccal(y_q) = R_q - R_r + O(\log(q^\star d)) .
\end{equation}

Now $y_q$ is a fixed truncation of $y_r$, so $\Ccal(y_q \mid y_r)=O(1)$, and hence
\begin{equation}
    \Ccal(y_q,y_r) = \Ccal(y_r) + O(1) .
\end{equation}
Applying the chain rule to $(y_q,y_r)$ therefore gives
\begin{equation}
\begin{aligned}
    \Ccal(y_r)
    &= \Ccal(y_q) + \Ccal(y_r \mid y_q) + O(\log(q^\star d)) \\
    \implies \quad
    \Ccal(y_r) - \Ccal(y_q)
    &= \Ccal(y_r \mid y_q) + O(\log(q^\star d)) .
\end{aligned}
\end{equation}
Comparing the two expressions for $\Ccal(y_r)-\Ccal(y_q)$ yields
\begin{equation}
\begin{aligned}
    R_q - R_r
    &= \Ccal(y_r \mid y_q) + O(\log(q^\star d)) \\
    \implies \quad
    R_q
    &= \Ccal(y_r \mid y_q) + R_r + O(\log(q^\star d)) .
\end{aligned}
\end{equation}

Finally, set $r=q^\star$. Then $y_{q^\star}=x^\star$ and $\delta_{q^\star}=0$, so
\begin{equation}
\begin{aligned}
    R_{q^\star} &= \Ccal(\delta_{q^\star} \mid y_{q^\star}) = O(1) \\
    \implies \quad
    R_q &= \Ccal(x^\star \mid y_q) + O(\log(q^\star d)) .
\end{aligned}
\end{equation}
Therefore, the residual loss at precision $q$ is the information needed to refine $y_q$ into the target object $x^\star$, up to the same logarithmic term. As more bit-planes are retained, this missing information decreases; at full-precision only the additive constant remains.

Repeated application of Thm.~\ref{thm:add_bitplane} gives the equivalent telescoping sum
\begin{equation}
    R_q - R_r = \sum_{j=q}^{r-1} \Ccal(b_{j+1}\mid y_j) + O((r-q)\log(q^\star d)) ,
\end{equation}
which shows that one conditional contribution is removed at each added bit-plane.
\end{proof}

\subsection{Empirical Validation of the Decrease in Residual Loss}\label{app:residual_validation}
We test the estimation gap using the CTM table as a finite reference. By the coding theorem, CTM assigns low complexity to blocks with high empirical algorithmic probability, up to an additive
constant~\citep{solomonoff1964formal,levin1974laws,delahaye2012numerical}. For each block size $\pi$, we sum the CTM values of all supported blocks once. This gives a reference target object for the local structures available to the finite estimator, without BDM multiplicity terms.

We construct an $8$-bit object whose bit-planes expose all supported CTM blocks. For intuition, consider a binary CTM table with $1\times2$ blocks and support $\{00,01,10,11\}$. With $q=2$, choose $\boldsymbol{\beta}_1=0011$ and $\boldsymbol{\beta}_0=0101$, which gives the quantized object $[0,1,2,3]$. Under QuBD, $\Pi(\boldsymbol{\beta}_1)=\{00,11\}$ and $\Pi(\boldsymbol{\beta}_0)=\{01,01\}$, so the exposed support is $\{00,01,11\}$. The reverse construction, if the binary string is $00011011=00\,01\,10\,11$, then serialization exposes all four supported blocks.

For the aligned object, we note that the gap decreases as more bit-planes are retained, matching Thm.~\ref{thm:add_bitplane} and Cor.~\ref{cor:cumulative_refinement}. For the comparison, we keep the same target object and apply a random symbol relabeling before exposure. Each relabeling assigns a unique new value to each value in $\{0,\ldots,255\}$. This preserves the target object up to
an additive constant, while changing the binary blocks exposed to the CTM table. Fig.~\ref{fig:qubd_bitplane_loss} shows that QuBD remains closer to the finite reference than serialization, one-bit quantization, and sign binarization under these relabelings.

\begin{figure}[htbp]
    \centering
    \includegraphics[width=1\linewidth]{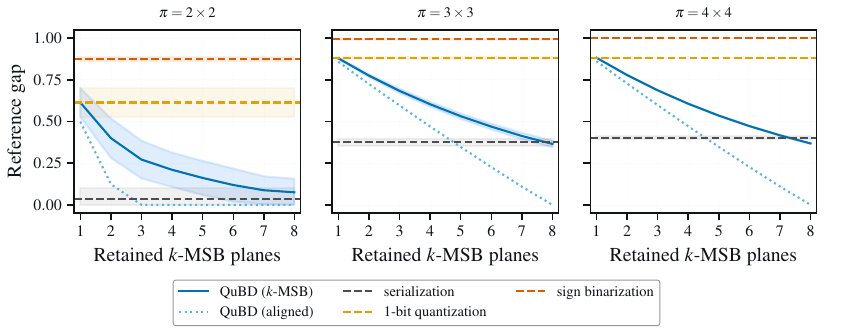}
    \caption{
    Relative gap ($\pm 1$ std.) to the finite CTM-support reference over $100$ trials (fixed symbol
    relabelings). The aligned QuBD control decreases as more MSB bit-planes are
    retained, validating the residual refinement predicted by Thm~\ref{thm:add_bitplane}. Under symbol relabeling, QuBD remains closer to the reference than serialization, one-bit quantization, and sign binarization.}
    \label{fig:qubd_bitplane_loss}
\end{figure}

\section{Saturation in Finite-state Machines}
\subsection{Additional Saturation Details}
\label{app:saturation_details}

As discussed in Sec.~\ref{paragraph:exposure}, lossless representations can
still expose objects differently. Besides low residual loss, a representation
should expose substructures that remain informative under the finite
support of the CTM table, i.e., the set of structures available to the
estimator.

Fix a block size $\pi$ and a finite CTM table with supported states $S_\pi$.
Let $f(u)$ denote the empirical frequency of a block $u\in S_\pi$, similar to
Eq.~\eqref{eq:app_ctm_dist}. We write $z_{q,r}=\phi(\mW^{(q)})$ to denote the
exposure of $\mW$ with $q$-bit quantization and representation $r$. Partition
$z_{q,r}$ into blocks by $\Pi_\pi(z_{q,r})=(\w_1,\ldots,\w_m)$, where
$m=|z_{q,r}|/\pi$. For each supported block $u\in S_\pi$, define
\begin{equation}
\begin{aligned}
    c_\pi(u;z_{q,r})
    &:= |\{i\le m:\w_i=u\}|, \\
    A_\pi(z_{q,r})
    &:= \{u\in S_\pi:c_\pi(u;z_{q,r})>0\}, \\
    a_\pi(z_{q,r})
    &:= |A_\pi(z_{q,r})|.
\end{aligned}
\end{equation}
Here $c_\pi(u;z_{q,r})$ counts how often $u$ is exposed under $z_{q,r}$, while
$a_\pi(z_{q,r})$ is the number of supported blocks that appear. The KCS estimate
of Eq.~\eqref{eq:kc_bdm} can then be expressed as
\begin{equation}\label{eq:app_bdm_saturation_split}
    \Ccal_\BDM(z_{q,r})
    =
    \underbrace{
    \sum_{u\in A_\pi(z_{q,r})} -\log_2 f(u)
    }_{\text{new blocks}}
    +
    \underbrace{
    \sum_{u\in A_\pi(z_{q,r})} \log_2 c_\pi(u;z_{q,r})
    }_{\text{multiplicity}} .
\end{equation}
Importantly, after most supported blocks have appeared, the first term changes
little, and the estimate grows mainly through multiplicities. In this case, we
consider CTM tables in the finite setting saturated.

To compare exposures, let $S_{\pi,r}^{(q)}\subseteq S_\pi$ be the blocks that
can be exposed by $z_{q,r}$, and let
$H_{\pi,r}^{(q)}:=|S_{\pi,r}^{(q)}|$. A CTM table may support many structures,
but a given exposure may only reach some of them. Let $\rho_{\pi,r}(u)$ denote
the probability that one exposed block of $z_{q,r}$ equals $u$. Then, for $m$
exposed blocks, we have
\begin{equation}
\begin{aligned}
    \E[a_\pi(z_{q,r})]
    =
    \sum_{u\in S_{\pi,r}^{(q)}}
    \left[1-(1-\rho_{\pi,r}(u))^m\right],
    \quad
    \sigma_{\pi,r}(m)
    :=
    \frac{\E[a_\pi(z_{q,r})]}{H_{\pi,r}^{(q)}} .
\end{aligned}
\end{equation}
Here $\sigma_{\pi,r}(m)$ is the fraction of reachable support expected to appear
after $m$ exposed blocks. We call the estimator $(1-\epsilon)$-saturated when
$\sigma_{\pi,r}(m)\ge 1-\epsilon$.

\subsection{Proof of Proposition~\ref{prop:finite_saturation} (Finite-table Saturation)}
\label{proof:finite_saturation}

Fix $q$, $r$, $\pi$, and $S_{\pi,r}^{(q)}$. We omit the fixed $q$ for
readability and write $S_{\pi,r}$ with $H_{\pi,r}:=|S_{\pi,r}|$. Let
$\Pi_\pi(z_{q,r})=(\w_1,\ldots,\w_m)$ be the exposed blocks of size $\pi$.

For each $u\in S_{\pi,r}$, define
$I_u:=\mathbf{1}\{c_\pi(u;z_{q,r})>0\}$. Then
$a_\pi(z_{q,r})=\sum_{u\in S_{\pi,r}} I_u$, and by linearity
\begin{equation}
\begin{aligned}
    \E[a_\pi(z_{q,r})]
    =
    \E\left[\sum_{u\in S_{\pi,r}} I_u\right]
    =
    \sum_{u\in S_{\pi,r}}\E[I_u] 
    =
    \sum_{u\in S_{\pi,r}}
    \Prob(c_\pi(u;z_{q,r})>0).
\end{aligned}
\end{equation}
Let $\rho_{\pi,r}(u)$ be the probability that one exposed block equals $u$.
Since the exposed blocks are sampled independently, the probability that none
of the $m$ exposed blocks equals $u$ is $(1-\rho_{\pi,r}(u))^m$. Hence
\begin{equation}
    \E[a_\pi(z_{q,r})]
    =
    \sum_{u\in S_{\pi,r}}
    \left[1-(1-\rho_{\pi,r}(u))^m\right].
\end{equation}

In the uniform setting, which serves as a worst-case upper bound on the number of blocks required for saturation, we assume

$\rho_{\pi,r}(u)=1/H_{\pi,r}$ for all
$u\in S_{\pi,r}$, and therefore
\begin{equation}
\begin{aligned}
    \E[a_\pi(z_{q,r})]
    &=
    \sum_{u\in S_{\pi,r}}
    \left[1-\left(1-\frac{1}{H_{\pi,r}}\right)^m\right] \\
    &=
    H_{\pi,r}
    \left[1-\left(1-\frac{1}{H_{\pi,r}}\right)^m\right]
    \approx
    H_{\pi,r}\left(1-e^{-m/H_{\pi,r}}\right),
\end{aligned}
\end{equation}
where the last step uses $(1-x)^m\approx e^{-mx}$ for small $x$. By definition,
$\sigma_{\pi,r}(m)= \frac{\E[a_\pi(z_{q,r})]}{H_{\pi,r}}$, so
\begin{equation}
\begin{aligned}
    \sigma_{\pi,r}(m)
    &=
    \frac{
    H_{\pi,r}\left[1-\left(1-\frac{1}{H_{\pi,r}}\right)^m\right]
    }{H_{\pi,r}} \\
    &=
    1-\left(1-\frac{1}{H_{\pi,r}}\right)^m
    \approx
    1-e^{-m/H_{\pi,r}} .
\end{aligned}
\end{equation}
Thus, saturation is controlled by $m/H_{\pi,r}$. We now write $a_\pi(m)$, but now for the size when the exposure contains $m$ blocks. Then the expected number of new supported blocks added by one more exposed block is
\begin{equation}
\begin{aligned}
    \E[a_\pi(m+1)]-\E[a_\pi(m)]
    &=
    H_{\pi,r}
    \left[
    \left(1-\frac{1}{H_{\pi,r}}\right)^m
    -
    \left(1-\frac{1}{H_{\pi,r}}\right)^{m+1}
    \right] \\
    &=
    \left(1-\frac{1}{H_{\pi,r}}\right)^m
    \approx
    e^{-m/H_{\pi,r}} .
\end{aligned}
\end{equation}
Thus, after $m$ blocks pass the saturation threshold, additional exposed blocks
mostly repeat blocks already present in the finite table, and the estimate grows
through multiplicity.

Finally, $(1-\epsilon)$-saturation means
$\sigma_{\pi,r}(m)\ge 1-\epsilon$, so
\begin{equation}
\begin{aligned}
    \sigma_{\pi,r}(m)\ge 1-\epsilon
    &\Longleftrightarrow
    1-\left(1-\frac{1}{H_{\pi,r}}\right)^m \ge 1-\epsilon \\
    &\Longleftrightarrow
    \left(1-\frac{1}{H_{\pi,r}}\right)^m \le \epsilon \\
    &\Longleftrightarrow
    m \log\!\left(1-\frac{1}{H_{\pi,r}}\right) \le \log \epsilon \\
    &\Longleftrightarrow
    m \ge
    \frac{\log \epsilon}{\log(1-1/H_{\pi,r})}
    \approx
    H_{\pi,r}\log\frac{1}{\epsilon},
\end{aligned}
\end{equation}
where the inequality flips in the last exact step since
$\log(1-1/H_{\pi,r})<0$, and the approximation uses
$\log(1-x)\approx -x$.

After saturation, the finite table mostly contributes through repeated counts.
For uniform exposure, the expected count of each supported block is
$m/H_{\pi,r}$, and for large $m$ these counts concentrate around this value.
And when $a_\pi(z_{q,r})\approx H_{\pi,r}$,
\begin{equation}
\begin{aligned}
    \sum_{u\in A_\pi(z_{q,r})}\log_2 c_\pi(u;z_{q,r})
    \approx
    \sum_{u\in S_{\pi,r}}
    \log_2\!\left(\frac{m}{H_{\pi,r}}\right) 
    =
    H_{\pi,r}\log_2\!\left(\frac{m}{H_{\pi,r}}\right).
\end{aligned}
\end{equation}
Thus, after saturation, further growth in the estimate comes mainly from
multiplicity terms rather than newly observed blocks. \hfill $\square$

\subsection{Saturation Across Exposures}\label{app:saturation_exposure}
We compare binarization, serialization, and bit-plane decomposed exposures by counting how many binary blocks, each representation exposes in the CTM table. Let $d$ be the number of quantized symbols in $\mW^{(q)}\in\Sigma_q^d$, and let $\pi$ be the binary block size. Since the available CTM tables are two-symbol tables, every exposure is evaluated through binary blocks in $S_\pi\subseteq\{0,1\}^{\pi}$. The relevant quantity is the number of exposed blocks $m_r$ relative to the reachable support $H_{\pi,r}^{(q)}$, and not only $d$, the number of $q$-ary symbols alone.

For the three exposures, the number of blocks is
\begin{equation}
    m_{\mathrm{bin}}=\frac{d}{\pi},\qquad
    m_{\mathrm{ser}}=\frac{qd}{\pi},\qquad
    m_{\mathrm{plane},j}=\frac{d}{\pi},\quad j=1,\ldots,k ,
\end{equation}
where $m_{\mathrm{plane},j}$ is counted per retained bit-plane. If $k$ bit-planes
are retained, the estimator considers $k$ binary objects separately and independently, and the
aggregate reachable support is
\begin{equation}
    H_{\pi,\mathrm{bit}}^{(k)}
    =
    \sum_{j=1}^{k}H_{\pi,\mathrm{bit},j}
    =
    kH_{\pi,\mathrm{bit}} .
\end{equation}

By Prop.~\ref{prop:finite_saturation}, the object sizes needed before
$(1-\epsilon)$-saturation are
\begin{equation}
\begin{aligned}
    d_{\mathrm{bin}}
    &\gtrsim \pi H_{\pi,\mathrm{bin}}\log\frac{1}{\epsilon}, \qquad
    d_{\mathrm{ser}}
    \gtrsim \frac{\pi H_{\pi,\mathrm{ser}}^{(q)}}{q}\log\frac{1}{\epsilon}, \\
    d_{\mathrm{plane},j}
    &\gtrsim \pi H_{\pi,\mathrm{plane},j}\log\frac{1}{\epsilon},
    ~~~\text{and}~~~
    d_{\mathrm{plane}}^{(k)}
    \gtrsim
    k \pi H_{\pi,\mathrm{plane}}^{(k)}\log\frac{1}{\epsilon}.
\end{aligned}
\end{equation}

Binarization exposes only $d/\pi$ blocks, but estimates a one-bit target. Serialization preserves the
$q$-bit target, but exposes $qd/\pi$ blocks to one finite table and therefore
saturates earlier. bit-plane exposure preserves the $q$-bit target and scores
the planes separately. We note that increasing $k$ reduces residual loss and increases the
aggregate support available to the estimator, while each plane is still exposed at
the slower rate per bit-plane.

For a concrete example, take $q=8$ and $\pi=16$, corresponding to binary $4\times4$
blocks. The CTM support is of size $|H|=2^{16}$. With $\epsilon=0.1$, we have
\begin{equation}
    H\log\frac{1}{\epsilon}
    =
    2^{16}\log\frac{1}{0.1}
    \approx
    1.51\times 10^5 .
\end{equation}
The corresponding object scales are
\begin{equation}
\begin{aligned}
    d_{\mathrm{ser}}
    &\gtrsim
    \frac{\pi H}{q}\log\frac{1}{\epsilon}
    =
    \frac{16}{8}\cdot 1.51\times 10^5
    \approx
    3.0\times 10^5, \\
    d_{\mathrm{bin}}
    &\gtrsim
    \pi H\log\frac{1}{\epsilon}
    =
    16\cdot 1.51\times 10^5
    \approx
    2.4\times 10^6, \\
    d_{\mathrm{plane}}
    &\gtrsim
    \pi H\log\frac{1}{\epsilon}
    =
    16\cdot 1.51\times 10^5
    \approx
    2.4\times 10^6 .
\end{aligned}
\end{equation}
Serialization reaches the same saturation about
$q=8$ times earlier than one bit-plane. Binarization has the same saturation
scale as a single bit-plane in this example, but it estimates a
different one-bit target and discards the lower-bit residuals. We include it merely for completeness. bit-plane exposure keeps the $q$-bit target while delaying saturation in the number of bit-planes.

\begin{figure}[tbp]
    \centering
    \includegraphics[width=1\linewidth]{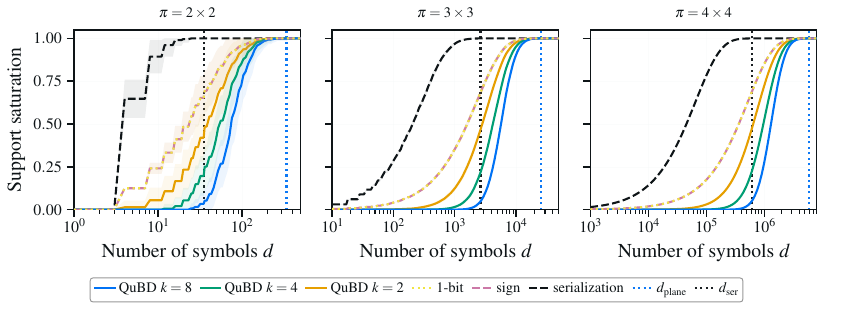}
    \caption{Empirical saturation rates as the number of symbols $d$ exposed to the CTM table increases across block sizes $\pi\in\{2\times2,3\times3,4\times4\}$. Each curves show the fraction of available support observed for each representation. Vertical lines denote the theoretical saturation rates derived for bit-plane decomposition in QuBD ($d_{\mathrm{plane}}$) and serialized ($d_{\mathrm{ser}}$) representations. Symbols are sampled uniformly at random from the support of the CTM-table for up to $10^6$ draws. Results are averaged over 100 trials.}
    \label{fig:saturation_limits}
\end{figure}

\begin{figure}[htbp]
    \centering
    \includegraphics[width=\linewidth]{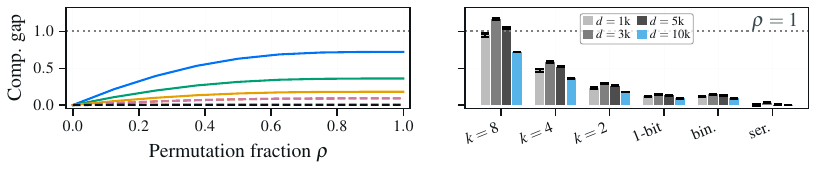}
    \includegraphics[width=\linewidth]{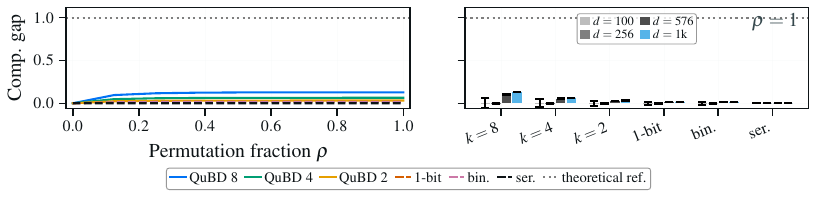}
    \caption{Complexity gap with \textbf{(top)} $\pi = (3\times 3)$ and \textbf{(bottom)} $\pi = (2\times 2)$, between estimations of an ordered $8$-ary object and its permutation, where $\rho$ is the fraction of symbols $d$ permuted. The gap is shown \textbf{(left)} for an object with $d = 1M$ symbols as $\rho$ increases, and \textbf{(right)} across object sizes for different methods when $\rho =1$. QuBD keeps a larger and more stable estimation gap than serialization, one-bit, and sign-binarization. Complexity values are normalized by the theoretical reference for its type class~\citep{cover1999elements}. Each object is sampled uniformly at random, and results are averaged over 100 trials.}
    \label{fig:rep_failure}
\end{figure}

\section{Additional Experimental Details}\label{app:experimental_details}

\subsection{Experimental Set-Up}
The accuracy-versus-complexity, learning curve, and grokking experiments were performed on a desktop workstation with an Intel i7 processor, 64 GB memory, and an Nvidia RTX 5090 GPU with 32GB memory. 
The QuBD complexity across bit-planes, per-layer, and PTQ experiments were conducted on an Apple Silicon Mac, with the latter using the MPS backend. 
All code was implemented in Pytorch~\citep{paszke2019pytorch}. Specific choices for individual experiments are presented next. 

\subsection{Complexity Across Models and Bit-Planes Experiment}\label{app:timm_bit_planes}
The results reported in Fig.~\ref{fig:complexity} and Fig.~\ref{fig:complexity_full} are based on 100 pretrained models sourced from the \texttt{timm} library~\citep{rw2019timm}. For each model, QuBD complexity was measured at 8-bit depth and normalized against a randomly initialized counterpart, following the weight tensor preprocessing described in Appx.~\ref{app:weight_preprocess}. This experiment requires no dataset evaluation and took approximately 10 minutes to complete.

\subsection{Accuracy-versus-Complexity Experiment}
\label{app:pareto}

The results reported in Fig.~\ref{fig:data-budget} are based on MLPs trained on FashionMNIST~\citep{xiao2017fashion}. Four-layered MLPs of varying capacities were configured by doubling the number of hidden units.  The base model had $[1024,512,256]$ hidden units; the three other versions were obtained by scaling the hidden units by the factor $s=[0.5,2.0,4.0]$, resulting in the four MLPs reported. All the models were trained for 100 epochs using 40k training samples, tested on 10k test samples using Adam optimizer~\citep{kingma2015adam} using a learning rate of $3\times 10^{-4}$ and weight decay of $0.01$ with batch size $128$. All experiments were repeated over three random seeds, and we report the mean and standard deviation of all the metrics. Each experimental run to generate the data in Fig.~\ref{fig:data-budget} took about 4.5 hours on the hardware described, and through the project lifetime these experiments were conducted no less than 10 times implying an overall compute budget of about 50 hours. 

\subsection{Learning Curve and Grokking Experiment}
\label{app:learning_curve}

Learning curves reported in Fig.~\ref{fig:training} follow a similar setup as reported above. Tiny ViT~\citep{tinyvit} on CIFAR-10~\citep{krizhevsky2009learning} was trained for 300 epochs. Each Tiny-Vit experiment took about 6 hours, and these experiments were repeated about five times. 

For the Grokking experiment in~\cite{power2022grokking}, we use a 2-layered MLP with $[512, 256]$ hidden units to model the modular addition task $z = (x + y) \pmod{P}$ with $P=97$. The dataset, consisting of the full $P^2$ Cartesian product, was partitioned into a 50\% training split and a 50\% validation split. The model was trained for 11k epochs and metrics, including QuBD complexity, were logged during training/validation. The grokking experiments took about 5 minutes on the reported hardware.

\subsection{Complexity per Layer Experiment}\label{app:per_layer}
The results reported in Fig.~\ref{fig:qubd_reduction_per_layer_resnet18} and Fig.~\ref{fig:qubd_reduction_per_layer_resnet50} are based on pretrained ResNet-18 and ResNet-50 models sourced from the \texttt{timm} library~\citep{rw2019timm}. For each layer with a weight tensor of at least two dimensions, QuBD complexity was measured at 8-bit depth and normalized against the complexity of a randomly initialized counterpart, following the weight tensor preprocessing described in Appx.~\ref{app:weight_preprocess}. The normalized complexity ratio is reported for the two most significant bit-planes (P7, P6). The experiment requires no dataset evaluation and took about 5 minutes to complete.

\subsection{Post Training Quantization Experiment}\label{app:ptq}
The results reported in Fig. \ref{fig:qubd_ptq} are based on five pretrained models sourced from the \texttt{timm} library~\citep{rw2019timm}: ResNet-18, ResNet-50, ViT-B/16, EfficientNet-B0, and MobileNetV3-Large, all trained on ImageNet~\citep{deng2009imagenet}. QuBD complexity was computed across 16 bit-planes for each model in full precision (FP32), following the weight tensor preprocessing described in Appx.~\ref{app:weight_preprocess}. 
Post-training quantization (PTQ) was applied using uniform weight quantization at bit-widths ranging from 1 to 8 bits, without any fine-tuning or calibration data. All quantized models were evaluated on the full ImageNet validation set consisting of 50,000 samples, using a batch size of 256. FP32 baseline was evaluated on the same validation set. The total compute time for this experiment was approximately 5 hours.

\clearpage
\section{Additional Results}
\label{app:res}
\subsection{QuBD Complexity Ratio Across All Bit-Planes}\label{app:qubd_ratio}
Fig. \ref{fig:complexity_full} shows the normalized QuBD complexity across all eight bit-planes for 100 pretrained models from the {\tt timm} library~\citep{rw2019timm}.  
\begin{figure}[h]
    \centering
    \includegraphics[width=0.8\linewidth]{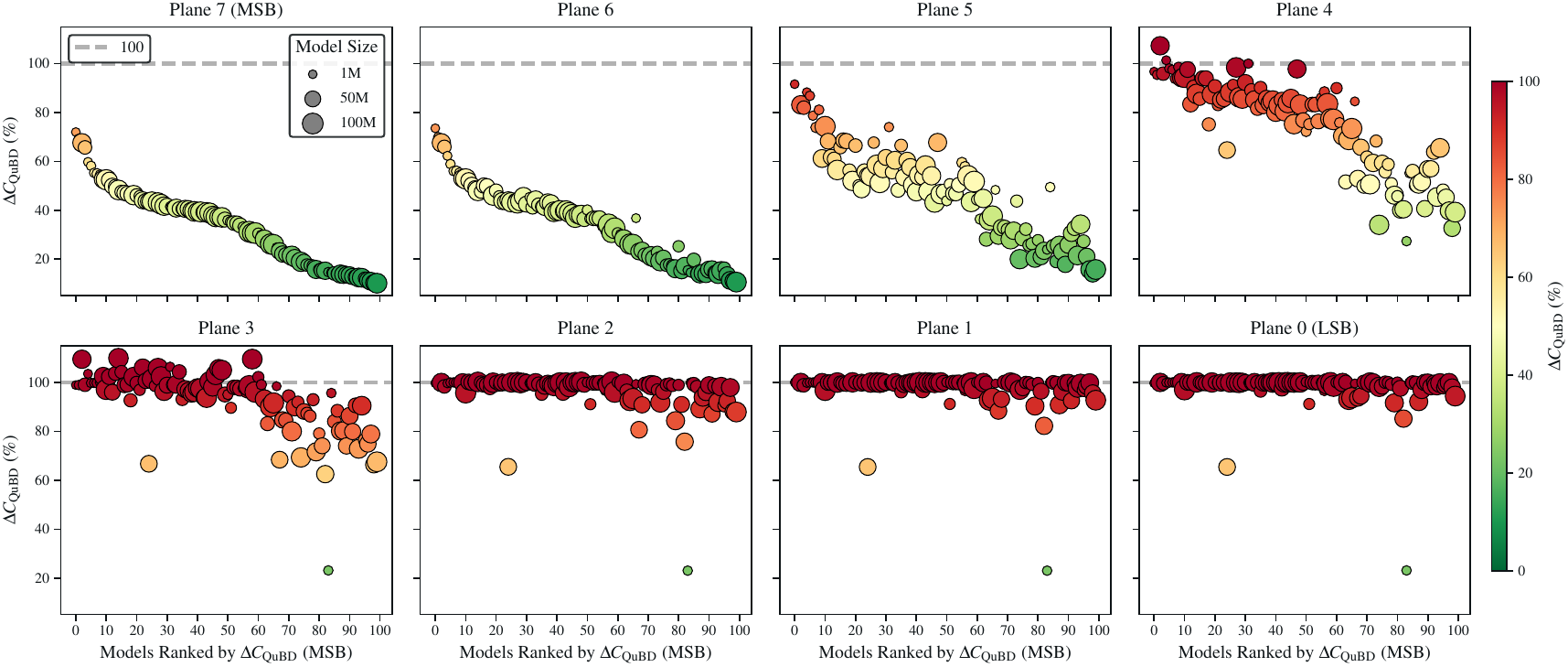}
    \caption{QuBD complexity ratio (pretrained / random) for 100 pretrained models from the {\tt timm} library, shown across all bit-planes of an 8-bit quantization. Models are ranked left to right by descending complexity ratio on the most significant bit-plane (Plane 7, MSB). Each dot represents one model proportional to the number of parameters, and the color indicates the complexity ratio. Plane 7 shows the largest complexity reduction across all models, while lower bit-planes show progressively smaller reductions, approaching the random baseline.
    }
    \label{fig:complexity_full}
\end{figure}

\subsection{QuBD Complexity Ratio for ResNet-50}\label{app:qubd_reduction_per_layer_resnet50}
Fig. \ref{fig:qubd_reduction_per_layer_resnet50} shows the normalized QuBD complexity  per layer for ResNet-50 for the two MSB-planes (P7, P6). It shows significant differences in reduction across different layers spanning close to $\Delta\Ccal_\text{QuBD}=0$\% and up to $\Delta\Ccal_\text{QuBD}=100$\%. Experimental details can be found in Appx.~\ref{app:timm_bit_planes}.  

\begin{figure}[h]
    \centering
    \includegraphics[width=0.5\linewidth]{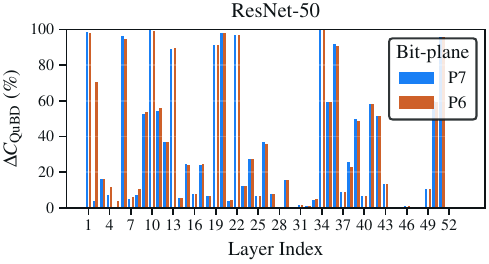}
    \caption{QuBD complexity ratio $\Delta\Ccal_\text{QuBD}$ per layer for ResNet-50 for the two most significant bit-planes (P7, P6).}
    \label{fig:qubd_reduction_per_layer_resnet50}
\end{figure}

\end{document}